\newcommand{\Sc}{\mathcal{S}}
\newcommand{\T}{\top}
\newcommand{\inflam}{\inf_{\substack{ \bm\lambda > 0 \\ \|\bm\lambda\|_2 =1 }}}
\def\1{\bm{1}}
\def\vlambda{{\bm\lambda}}
\def\va{{\bm{a}}}
\def\vb{{\bm{b}}}
\def\vc{{\bm{c}}}
\def\vq{{\bm{q}}}
\def\vs{{\bm{s}}}
\def\vu{{\bm{u}}}
\def\vv{{\bm{v}}}
\def\vw{{\bm{w}}}
\def\vx{{\bm{x}}}
\def\mB{{\bm{B}}}
\def\mD{{\bm{D}}}
\def\mI{{\bm{I}}}
\def\mM{{\bm{M}}}
\def\mP{{\bm{P}}}
\def\mQ{{\bm{Q}}}
\def\mU{{\bm{U}}}
\def\mV{{\bm{V}}}
\def\mW{{\bm{W}}}
\def\mX{{\bm{X}}}
\DeclareMathAlphabet{\mathsfit}{\encodingdefault}{\sfdefault}{m}{sl}
\SetMathAlphabet{\mathsfit}{bold}{\encodingdefault}{\sfdefault}{bx}{n}
\newcommand{\R}{\mathbb{R}}
\DeclareMathOperator{\rank}{rank}
\DeclareMathOperator{\sspan}{span}
\DeclareMathOperator{\Tr}{Tr}
\crefname{section}{\S}{\S}
\theoremstyle{plain}
\newtheorem{theorem}{Theorem}[section]
\newtheorem{proposition}[theorem]{Proposition}
\newtheorem{lemma}[theorem]{Lemma}
\newtheorem{corollary}[theorem]{Corollary}
\theoremstyle{definition}
\theoremstyle{remark}
\title{The Role of Linear Layers in Nonlinear Interpolating Networks}
\author{Greg Ongie\footnote{G. Ongie is with the Department of Mathematical and Statistical Sciences, Marquette University, Milwaukee, WI, USA. e-mail: gregory.ongie@marquette.edu}~~\& Rebecca Willett\footnote{R. Willett is with the Department of Statistics and Department of Computer Science, University of Chicago, Chicago, IL, USA.}}
\begin{document}
\maketitle

\begin{abstract}
This paper explores the implicit bias of overparameterized neural networks of depth greater than two layers. Our framework considers a family of networks of varying depth that all have the same {\em capacity} but different implicitly defined {\em representation costs}. 
The representation cost of a function induced by a neural network architecture is the minimum sum of squared weights needed for the network to represent the function; it reflects the function space bias associated with the architecture.
Our results show that adding linear layers to a ReLU network
yields a representation cost that reflects a complex interplay between the alignment and sparsity of ReLU units.
Specifically, using a neural network to fit training data with minimum representation cost yields an interpolating function that is constant in directions perpendicular to a low-dimensional subspace 
on which a parsimonious interpolant exists.
\end{abstract}

\section{Introduction}
\label{sec:intro}

An outstanding problem in understanding the generalization properties of overparameterized neural networks is to characterize which functions are best represented by neural networks of varying architectures. Past work explored the notion of {\em representation costs} -- i.e., how much does it ``cost'' for a neural network to represent some function $f$. 
Specifically, the representation cost of a function $f$ is the minimum sum of squared network weights necessary for the network to represent $f$.

The following key question then arises:
\textbf{How does network depth affect which functions have minimum representation cost?} 
For instance, given a set of training samples, say we find the interpolating function that minimizes the representation cost; how is that interpolant different for a network with three layers instead of two layers? Both functions have the same values on the training samples, but they may have very different behaviors elsewhere in the domain. 

In this paper, we describe the representation cost of a family of networks with $L$ layers in which $L-1$ layers have linear activations and the final layer has a ReLU activation. As detailed in \cref{sec:related}, networks related to this class play an important role in both theoretical studies of neural network generalization properties and experimental efforts. {One reason that this is a particularly important family to study is that
adding linear layers does not change the capacity or expressively of a network, even though the number of parameters may change; this means that different behaviors for different depths  solely reflects the role of depth and not of capacity.}

We show that adding linear layers to a ReLU network with weight decay regularization is akin to using a two-layer ReLU network with 
nuclear or Schatten norm regularization on the weight matrix. This insight suggests that lower-rank weight matrices, corresponding to aligned ReLU units, will be favored.
However, the representation costs we derive provide a nuanced perspective that extends beyond ``linear layers promote alignment'', as illustrated in \cref{fig:first_example} and \cref{fig:counterintuitive}. In particular, the effect of linear layers, as understood through the corresponding representation costs, reflects a subtle interplay between ReLU unit alignment and the magnitudes of the outer layer weights. We find that lower representation costs are associated with  functions that can be parsimoniously expressed using only the orthogonal projection of their inputs onto a low-dimensional subspace.

\subsection{Related work}
\label{sec:related}
Past work has explored the role of neural network depth via a ``depth separation'' analysis (e.g. \cite{daniely2017depth,vardi2020neural}); these  analyses identify functions which may be efficiently represented at one depth but require an exponential width to represent them with fewer layers. This line of work has yielded important insights into the role of depth, but recent work has highlighted how functions leading to depth separation results are often highly oscillatory and  perhaps not fully capturing the import of depth in practical settings (i.e., may be ``worse case'' but not ``average case'' results).
In particular, \cite{safran2019depth}  shows that if the Lipschitz constant of the target function is kept fixed, then  existing depth separation results between 2- and 3-layer nets do not hold.

A number of papers have studied representation costs and implicit regularization from a  function space perspective associated with neural networks.
Following a univariate analysis by \cite{savarese2019infinite},  \cite{ongie2019function} considers two-layer multivariate ReLU networks where the hidden layer has
infinite width:
$$
\lim_{K \rightarrow \infty} \sum_{k=1}^K a_k[\vw_k^\top \vx + b_k]_+.
$$
Recent work by \cite{mulayoff2021implicit} connects the function space representation costs of two-layer ReLU networks to the stability of SGD minimizers.

\cite{gunasekar2018implicit} shows that $L$-layer {\em linear} networks with \textit{diagonal} structure induces a non-convex implicit bias over network weights corresponding to the $\ell^q$ norm of the outer layer weights for $q = 2/L$; similar conclusions hold for deep \textit{linear} convolutional networks. 
Recent work by \cite{dai2021representation} examines the representation costs of deep \textit{linear} networks  from a function space perspective. 
However, the existing literature does not 
fully characterize the representation costs of \textit{deep, non-linear} networks from a function space perspective.
\cite{parhi2021banach} consider deeper networks and define a compositional function space with a corresponding representor theorem; 
the properties
of this function space and the role of depth are an area of active investigation.

Our paper focuses on the role of linear layers in \textit{nonlinear} networks. The role of linear layers in such settings has been explored in a number of works.
\citet{golubeva2020wider} looks at the role of network \emph{width} when the    number of parameters is held fixed; it specifically looks at increasing    the width without increasing the number of parameters by adding    linear layers.
This procedure seems to help with generalization
performance (as long as the training error is controlled). However,
\citet{golubeva2020wider} 
note that
the implicit regularization caused by this
approach is not understood. 
\textit{One of the main contributions of our paper is a better understanding of this implicit regularization.}

\sloppypar
The effect of linear layers on training speed was previously examined by \cite{ba2013deep,urban2016deep}. \cite{arora2018optimization} considers implicit acceleration in deep nets and claims that depth induce a momentum-like term in training deep \emph{linear} networks with SGD, though the regularization effects of this acceleration are not well understood.  Implicit regularization of gradient descent has been studied in the context of matrix and tensor factorization problems \cite{gunasekar2018implicit,arora2019implicit,razin2020implicit,razin2021implicit}. Similar to this work, low-rank representations play a key role in their analysis. 

\begin{figure}[ht!]
    \centering
    \subfigure[$L=2$ layers]{
    \includegraphics[width=0.3\columnwidth]{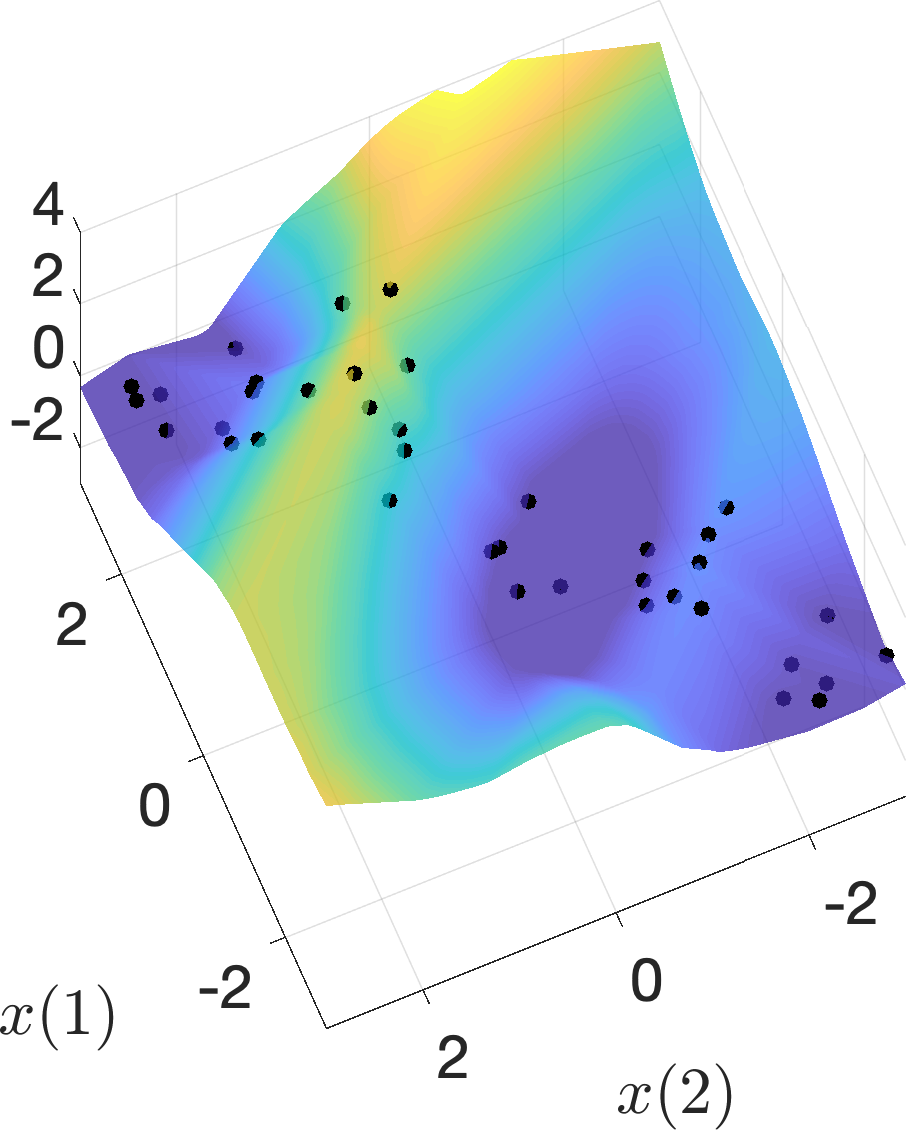}}
    \subfigure[$L=3$ layers]{
    \includegraphics[width=0.3\columnwidth]{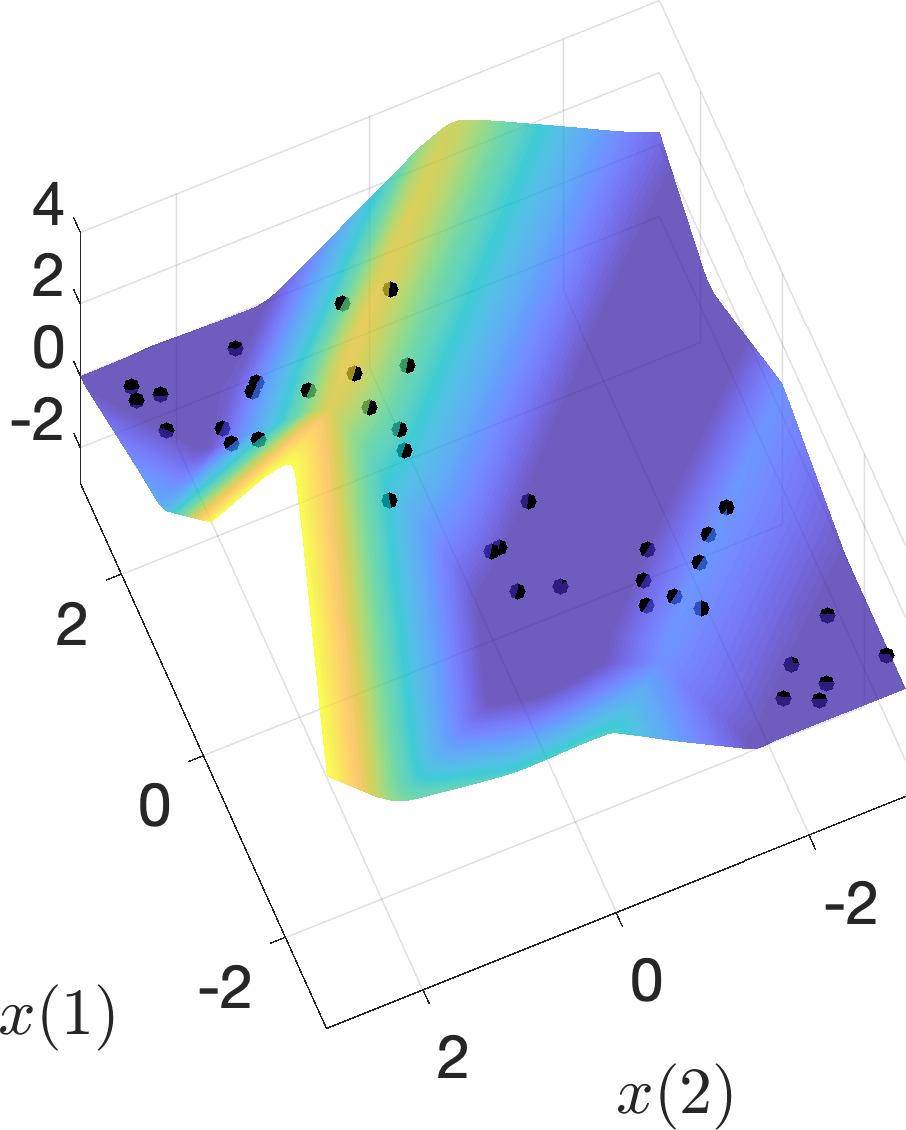}
    }
    \subfigure[$L=4$ layers]{
    \includegraphics[width=0.3\columnwidth]{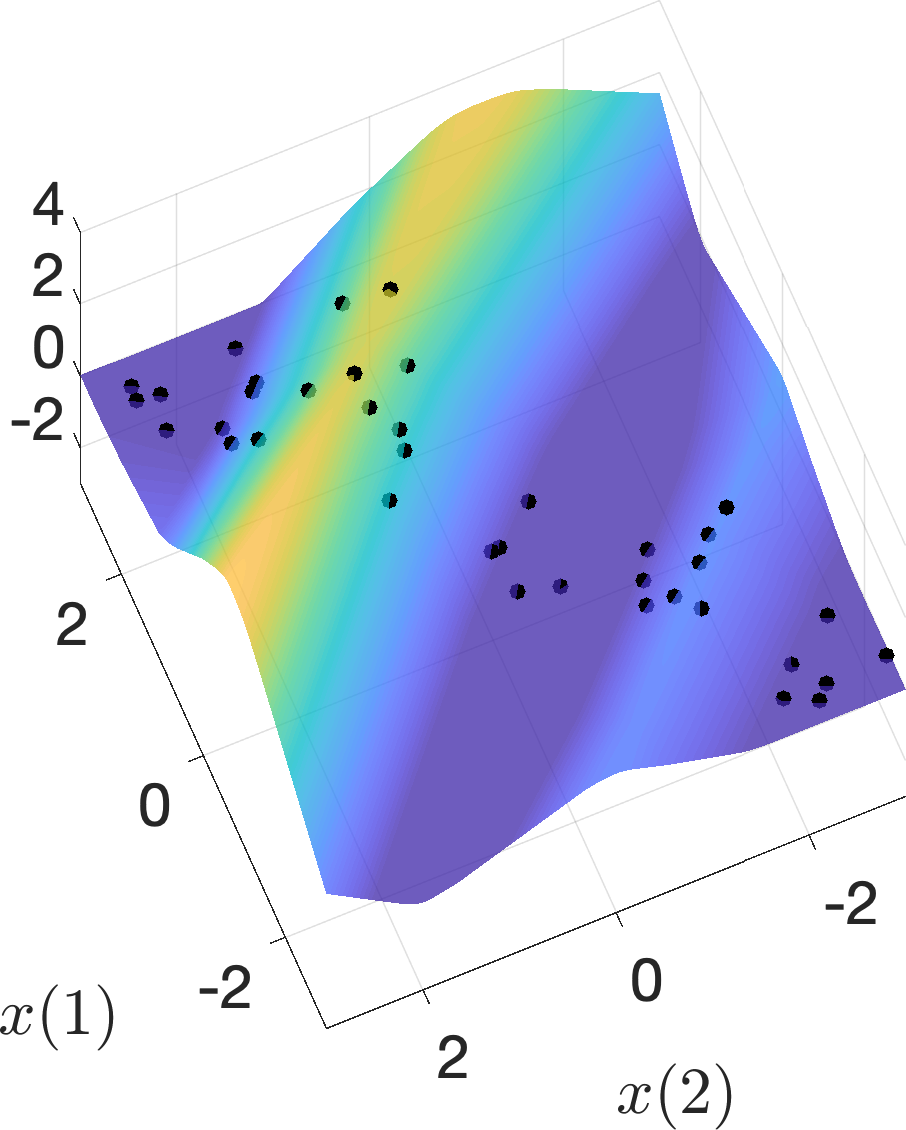}
    }
    \caption{\textbf{Numerical evidence that weight decay promotes unit alignment with more linear layers.} Neural networks with $L-1$ linear layers plus one ReLU layer were trained using SGD with weight decay regularization to close to zero training loss on the training samples, as shown in black. Pictured in (a)-(c) are the resulting interpolating functions shown as surface plots. 
    Our theory predicts that as the number of linear layers increases, the learned interpolating function will become closer to constant in directions perpendicular to a low-dimensional subspace on which a parsimonious interpolant can be defined.}
    \label{fig:first_example}
\end{figure}

\subsection{Notation}
For a vector $\va\in \R^K$, we use $\|\va\|_p$ to denote its $\ell^p$ norm. For a matrix $\mW$, we use $\|\mW\|_F$ to denote the Frobenius norm, $\|\mW\|_*$ to denote its nuclear norm (i.e., the sum of the singular values), and for $0 < q \leq 1$ we use
$\|\mW\|_{\Sc^q}$ to denote its Schatten-$q$ quasi-norm (i.e., the $\ell^q$ quasi-norm of the singular values of a matrix $\mW$). Given a vector $\va \in \R^{K}$, the matrix $\mD_{\va} \in \R^{K \times K}$ is a diagonal matrix with the entries of $\va$ along the diagonal. For a vector $\bm\lambda$, we write $\bm\lambda > 0$ to indicate it has all positive entries. Finally, we use $[t]_+ = \max\{0,t\}$ to denote the ReLU activation, and whose application to vectors is understood entrywise.

\section{Definitions}
Let $N_2(\R^d)$ denote the space of functions expressible as a two-layer ReLU network having input dimension $d$ and such that the width $K$ of the single hidden layer is unbounded. Every function in $N_2(\R^d)$ is described (non-uniquely) by a collection of weights $\theta = (\mW,\va,\vb,c)$:
\begin{align}
h_\theta^{(2)}(\vx) & = \va^\T[\mW\vx + \vb]_+ + c.\\
& = \sum_{k=1}^K a_k[\vw_k^\T\vx + b_k]_+ + c
\end{align}
with $\mW \in \R^{K\times d}$, $\va,\vb \in \R^K$ and $c\in \R$. We denote the set of all such parameter vectors $\theta$ by $\Theta_2$.

In this work, we consider a re-parameterization of networks in $N_2(\R^d)$. Specifically, we replace the linear input layer $\mW$  with $L-1$ linear layers:
\begin{align}
    h_\theta^{(L)}(\vx) & = \va^\T[\mW_{L-1}\cdots\mW_2\mW_1 \vx + \vb]_+ + c
\end{align}
where now $\theta = (\mW_1,\mW_2,...,\mW_{L-1},\va,\vb,c)$. Again, we allow the widths of all layers to be arbitrarily large. Let $\Theta_L$ denote the set of all such parameter vectors. With any $\theta \in \Theta_L$  we associate the cost
\begin{equation}
    C_L(\theta) = \frac{1}{L}\left(\|\va\|_2^2 + \|\mW_{1}\|_F^2 + \cdots +  \|\mW_{L-1}\|_F^2\right),
\end{equation}
i.e., the squared Euclidean norm of all non-bias weights.

Given training pairs $\{(\vx_i,y_i)\}_{i=1}^n$, consider the problem of finding a $L$-layer network with minimal cost $C_L$ that interpolates the training data:
\begin{equation}\label{eq:opt1}
\min_{\theta \in \Theta_L} C_L(\theta)~~s.t.~~h^{(L)}_\theta(\vx_i) = y_i
\end{equation}
This optimization is akin to training a network to interpolate training data using SGD with squared $\ell^2$ norm or weight decay regularization \cite{hanson1988comparing,loshchilov2017decoupled}. 
We may recast this as an optimization problem in function space: for any $f \in N_2(\R^d)$, define its $L$-layer representation cost $R_L(f)$ by
\begin{equation}\label{eq:RLdef}
    R_L(f) = \min_\theta C_L(\theta)~~s.t.~~f = h^{(L)}_\theta.
\end{equation}
Then \eqref{eq:opt1} is equivalent to:
\begin{equation}\label{eq:opt2}
\min_{f \in N_2} R_L(f)~~s.t.~~f(\vx_i) = y_i.
\end{equation}
Earlier work such as \cite{savarese2019infinite} has shown that
\begin{align}\label{eq:R2}
R_2(f) =& \min_{\theta \in \Theta_2} \|\va\|_1~~\text{s.t.}~~\|\vw_k\|_2 =1,\\
&\qquad \forall k=1,...,K~\text{and}~f = h^{(2)}_\theta \nonumber
\end{align}
\textit{Our goal is to characterize the representation cost $R_L$ for different numbers of layers $L \geq 3$, and describe how the set of global minimizers of \eqref{eq:opt2} changes with $L$, providing insight into the role of linear layers in nonlinear ReLU networks.}

\begin{figure}[ht!]
    \centering
    \subfigure[\textit{Left}: Minimum $R_2$ interpolant when samples lie on two rays separated by an angle of $0.55\pi$. \textit{Right}: Minimum $R_3$ interpolant of same data.]{\includegraphics[width=.7\linewidth]{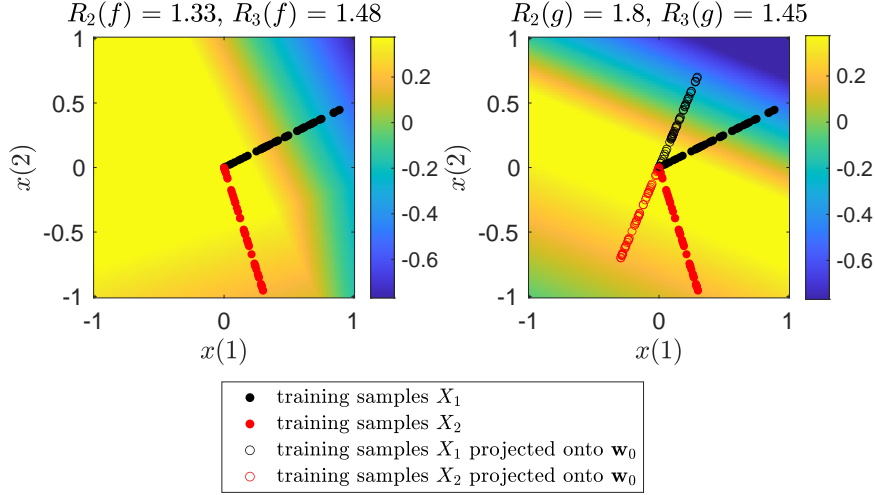}}
    \subfigure[\textit{Left}: Minimum $R_2$ interpolant when samples lie on two rays separated by an angle of $0.52\pi$. \textit{Right}: Interpolant of same data with aligned ReLU units \textbf{does not} minimize $R_3$.]{\includegraphics[width=.7\linewidth]{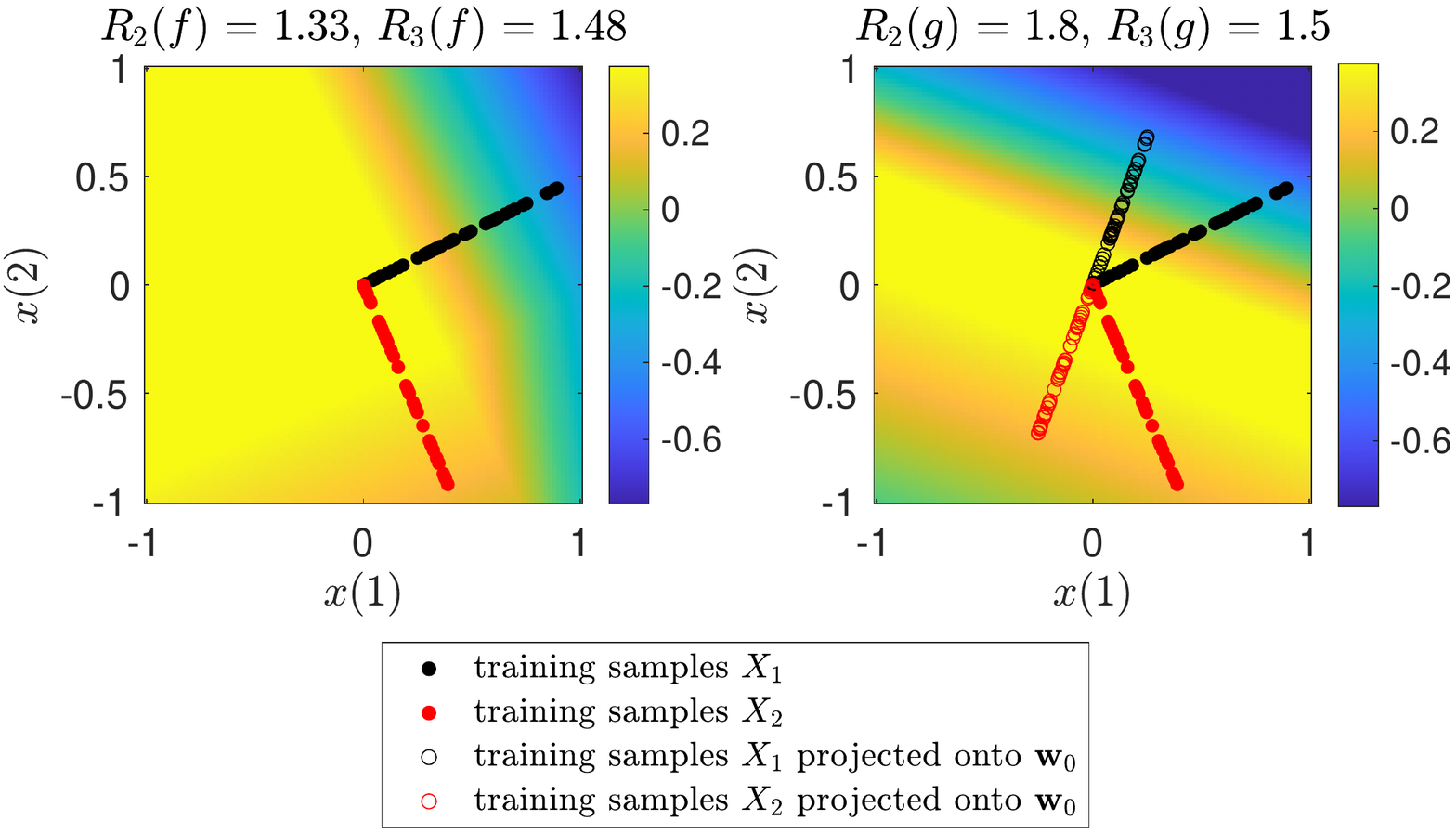}}
    \caption{\textbf{Adding linear layers does not always promote alignment of ReLU units.} (a) Samples lie on two rays separated by $0.55\pi$. The minimum $R_2$  interpolant $f$  is quite different from the minimum $R_3$ interpolant $g$, where the latter has ReLU units aligned with a subspace that is the difference between the two rays. (b) Here the rays are only separated by $0.52\pi$, and the minimum $R_2$ interpolant $f$ also has a smaller $R_3$ representation cost than the interpolant with aligned ReLU units. This example illustrates that \textit{ReLU alignment alone does not capture representation costs of deeper networks.}
    }
    \label{fig:counterintuitive}
\end{figure}

\section{Simplifying the Representation Cost}\label{sec:simplify}
Here we derive simplified expressions for the representation costs $R_L$ with $L\geq 3$. Proofs of all results in this section are given in \cref{app:repcost}

Our first result shows that if the predictor function is univariate then the $R_L$ representation cost reduces to the $2/L$-power of the $R_2$ representation cost:
\begin{theorem}\label{thm:univar}
If $f \in N_2(\R^1)$ (i.e., $f$ is univariate) then
\begin{equation}
       R_L(f) = [R_2(f)]^{2/L}.
\end{equation}
\end{theorem}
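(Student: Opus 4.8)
The plan is to establish the two inequalities $R_L(f)\le [R_2(f)]^{2/L}$ and $R_L(f)\ge [R_2(f)]^{2/L}$ separately, after first collapsing the linear layers into a single vector. The starting observation is that since the input $x\in\R$ is a scalar, the product $\mW_{L-1}\cdots\mW_1\,x$ equals $x\,\vm$ with $\vm:=\mW_{L-1}\cdots\mW_1\in\R^{K_{L-1}}$ a single column vector, so that $h^{(L)}_\theta(x)=\va^\T[\vm x+\vb]_+ + c$ is literally a function in $N_2(\R^1)$ with hidden weights $\vm$. The cost of the linear layers can then be optimized independently of $(\va,\vb,c)$: for a fixed product vector $\vm$, I would combine the submultiplicativity $\|\mW_{L-1}\cdots\mW_1\|_2\le\prod_j\|\mW_j\|_2\le\prod_j\|\mW_j\|_F$ with AM--GM to get $\sum_{j}\|\mW_j\|_F^2\ge (L-1)\|\vm\|_2^{2/(L-1)}$, with equality when each $\mW_j$ is rank one and aligned (here I use that a column vector's operator norm equals its Euclidean norm). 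This collapses the problem to
\[
R_L(f)=\min\;\tfrac{1}{L}\Bigl(\|\va\|_2^2+(L-1)\|\vm\|_2^{2/(L-1)}\Bigr)
\]
over all representations $f=\sum_k a_k[m_k x+b_k]_+ + c$.

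For the lower bound I would split the objective into the $L$ quantities $\|\va\|_2^2$ and $L-1$ copies of $\|\vm\|_2^{2/(L-1)}$ and apply AM--GM, yielding $\tfrac1L(\|\va\|_2^2+(L-1)\|\vm\|_2^{2/(L-1)})\ge\bigl[(\sum_k a_k^2)(\sum_k m_k^2)\bigr]^{1/L}$. The crucial link to $R_2$ comes from positive homogeneity of the ReLU: each unit satisfies $a_k[m_k x+b_k]_+=(a_k|m_k|)[\sign(m_k)x+b_k/|m_k|]_+$, so the numbers $a_k|m_k|$ are the outer weights of a unit-inner-weight representation of $f$. By the definition of $R_2$ in \eqref{eq:R2} followed by Cauchy--Schwarz,
\[
R_2(f)\;\le\;\sum_k|a_k|\,|m_k|\;\le\;\Bigl(\sum_k a_k^2\Bigr)^{1/2}\Bigl(\sum_k m_k^2\Bigr)^{1/2},
\]
so $(\sum_k a_k^2)(\sum_k m_k^2)\ge R_2(f)^2$, and combining the two displays gives $R_L(f)\ge[R_2(f)]^{2/L}$.

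For the matching upper bound I would start from an optimal $R_2$ representation $f=\sum_k\alpha_k[\sigma_k x+\beta_k]_+ + c$ with $\sum_k|\alpha_k|=R_2(f)$ and $\sigma_k=\pm1$, and rescale each unit by a common factor, setting $|m_k|=\sqrt{\lambda|\alpha_k|}$, $m_k=\sigma_k|m_k|$, $a_k=\alpha_k/|m_k|$, and $b_k=\beta_k|m_k|$. Then $\|\va\|_2^2=R_2(f)/\lambda$ and $\|\vm\|_2^2=\lambda R_2(f)$, and choosing $\lambda$ to equalize the $L$ terms in the AM--GM makes every inequality above tight and produces cost exactly $[R_2(f)]^{2/L}$.

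I expect the main obstacle to be the lower bound, specifically the recognition that optimally collapsing the linear layers produces precisely the product $(\sum_k a_k^2)(\sum_k m_k^2)$, and that Cauchy--Schwarz against the $R_2$-definition is exactly the inequality that saturates in the upper-bound construction, so that both directions share the same equality conditions. A minor technical point I would still check is the treatment of degenerate units with $m_k=0$ or $a_k=0$, which either vanish or fold into the constant $c$ and hence affect neither bound; I would also confirm that the unbounded width permits matching the number of units in the optimal $R_2$ representation so that the upper-bound construction is admissible.
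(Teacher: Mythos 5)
Your proof is correct, but it takes a more elementary and self-contained route than the paper. The paper's own proof is a one-liner: in the univariate case the product of the linear layers is a column vector, hence a rank-one matrix, so the claim follows from \cref{cor:rankone} ($\Phi_L(\mW,\va)=\|\va\|_1^{2/L}$ for rank-one $\mW$ with unit-norm rows), which in turn rests on the Schatten quasi-norm variational characterization in \cref{lem:schatten} (cited from the literature), the rescaling reduction of \cref{lem:phiLinfdef}, and the group-sparsity computation of \cref{prop:orthorows}. You bypass all of that machinery: your submultiplicativity-plus-AM--GM argument is a direct proof of the rank-one special case of the Schatten characterization, and your lower bound replaces the $\inf_{\bm\lambda}$ Lagrange-multiplier computation with a single Cauchy--Schwarz step against the $\ell^1$ form of $R_2$ in \eqref{eq:R2}, with the matching upper bound given by an explicit balanced rescaling. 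What the paper's approach buys is generality --- the same lemmas handle multivariate $\mW$ and yield \cref{prop:orthorows} and its corollaries, of which the univariate theorem is just one instance --- whereas your argument buys independence from the cited external results (\cite{shang2020unified}, \cite{steinberg2005computation}) and makes the equality/saturation structure transparent. The technical caveats you flag (degenerate units with $m_k=0$, attainment of the minimum in \eqref{eq:R2}, the trivial case $R_2(f)=0$) are genuine but easily dispatched exactly as you describe, and for tightness of your linear-layer bound you should note that the factors must also have balanced Frobenius norms, which your width-one intermediate-layer construction already provides.
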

This shows that $L$-layer minimum norm interpolants in 1-D coincide  with $2$-layer minimum norm interpolants, as characterized by \cite{savarese2019infinite,hanin2019universal}.

However, in the multivariate setting, where the input dimension $d>1$, the $R_L$-costs with $L\geq 3$ are not simply a monotonic transform of the $R_2$-cost, as we now show.

First, we prove that the general $R_L$-cost can be re-cast as an optimization over two-layer networks, but where the representation cost associated with the inner-layer weight matrix $\mW$ changes with $L$:

\begin{lemma}\label{lem:schatten} 
Suppose $f\in N_2(\R^d)$. Then 
\begin{equation}\label{eq:nucnormex}
R_L(f) = \min_{\theta \in \Theta_2} \tfrac{1}{L}\|\va\|_2^2 + \tfrac{L-1}{L}\|\mW\|^{q}_{\Sc^{q}}~~s.t.~~f = h_\theta^{(2)}
\end{equation}
where  $q:=2/(L-1)$ and $\|\mW\|_{\Sc^q}$ is the Schatten-$q$ quasi-norm, i.e., the $\ell^q$ quasi-norm of the singular values of $\mW$.
\end{lemma}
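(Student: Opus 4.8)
The plan is to exploit the fact that $h^{(L)}_\theta$ depends on the linear layers $\mW_1,\dots,\mW_{L-1}$ only through their product $\mW := \mW_{L-1}\cdots\mW_1$, since $h^{(L)}_\theta = h^{(2)}_{(\mW,\va,\vb,c)}$. The bias terms $\vb,c$ and the outer weights $\va$ enter $h^{(L)}_\theta$ exactly as in a two-layer network, and only $\va$ and the $\mW_j$ contribute to the cost $C_L$. Hence I would first rewrite $R_L(f)$ as a minimization over two-layer parameters $(\mW,\va,\vb,c)$ representing $f$, with an inner minimization over all factorizations of the fixed matrix $\mW$ into $L-1$ factors:
\begin{equation*}
R_L(f) = \min_{\substack{(\mW,\va,\vb,c)\\ f = h^{(2)}_{(\mW,\va,\vb,c)}}} \frac1L\left(\|\va\|_2^2 + \min_{\mW = \mW_{L-1}\cdots\mW_1}\sum_{j=1}^{L-1}\|\mW_j\|_F^2\right).
\end{equation*}
Everything then reduces to evaluating the inner factorization problem.

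The crux is the following matrix identity: for any $\mW$ and any integer $m \ge 1$,
\begin{equation*}
\min_{\mW = \mW_m \cdots \mW_1}\sum_{j=1}^m \|\mW_j\|_F^2 = m\,\|\mW\|_{\Sc^{2/m}}^{2/m},
\end{equation*}
where the minimum ranges over factorizations with arbitrary (compatible) inner dimensions. Setting $m = L-1$ and $q = 2/(L-1)$ and substituting into the inner minimum turns the bracketed expression into $\|\va\|_2^2 + (L-1)\|\mW\|_{\Sc^q}^q$, which after the factor $\tfrac1L$ yields the claimed form $\tfrac1L\|\va\|_2^2 + \tfrac{L-1}{L}\|\mW\|_{\Sc^q}^q$.

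For the matrix identity I would argue both directions. For achievability, take a reduced SVD $\mW = U\Sigma V^\T$ with $\Sigma = \mathrm{diag}(\sigma_1,\dots,\sigma_r)$ and set $\mW_1 = \Sigma^{1/m}V^\T$, $\mW_j = \Sigma^{1/m}$ for $2 \le j \le m-1$, and $\mW_m = U\Sigma^{1/m}$; their product is $\mW$, and since $U,V$ have orthonormal columns each factor satisfies $\|\mW_j\|_F^2 = \sum_i \sigma_i^{2/m}$, giving total cost $m\sum_i \sigma_i^{2/m} = m\|\mW\|_{\Sc^{2/m}}^{2/m}$ (the free intermediate widths available in $\Theta_L$ are precisely what make this rank-$r$ factorization admissible). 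For the lower bound I would combine AM--GM, $\sum_j \|\mW_j\|_F^2 \ge m\big(\prod_j \|\mW_j\|_F\big)^{2/m}$, with the multiplicative Schatten--Hölder inequality $\prod_{j=1}^m \|\mW_j\|_F \ge \|\mW_m\cdots\mW_1\|_{\Sc^{2/m}}$, i.e. the $1/r = \sum_j 1/p_j$ Hölder inequality for Schatten norms with each $p_j = 2$. Together these give $\sum_j \|\mW_j\|_F^2 \ge m\|\mW\|_{\Sc^{2/m}}^{2/m}$, matching the construction, and the fact that the construction attains the bound justifies writing $\min$ rather than $\inf$.

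The main obstacle is the lower bound when $m \ge 3$, i.e. when $q = 2/m < 1$: here $\|\cdot\|_{\Sc^q}$ is only a quasi-norm, so I must verify that the Schatten--Hölder inequality remains valid in this regime. It does, since it reduces via the SVD and the multiplicative Weyl inequalities $\prod_{i\le k}\sigma_i(AB) \le \prod_{i\le k}\sigma_i(A)\sigma_i(B)$ (plus majorization) to a scalar statement that does not rely on the triangle inequality; I would establish it cleanly by induction on $m$ from the base case $\|\mW_2\mW_1\|_* \le \|\mW_1\|_F\|\mW_2\|_F$ together with the two-factor inequality $\|AB\|_{\Sc^{2/m}} \le \|A\|_F\,\|B\|_{\Sc^{2/(m-1)}}$, whose exponents satisfy $\tfrac{m}{2} = \tfrac12 + \tfrac{m-1}{2}$.
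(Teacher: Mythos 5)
Your proof is correct and follows the same route as the paper: both reduce the $L$-layer cost to a two-layer cost plus the variational identity $\min_{\mW = \mW_{L-1}\cdots\mW_1}\sum_{j}\|\mW_j\|_F^2 = (L-1)\,\|\mW\|_{\Sc^{2/(L-1)}}^{2/(L-1)}$ over factorizations with free inner widths. The only difference is that the paper cites this identity (the two-factor case from Srebro et al., the general case from Shang et al.), whereas you supply a self-contained and correct proof of it, using the balanced SVD factorization for achievability and AM--GM together with the Schatten--H\"older inequality (which you rightly note survives in the quasi-norm regime $q<1$ via Weyl's log-majorization) for the matching lower bound.
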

Note that Schatten-$q$ quasi-norms with $0 < q \leq 1$ are often used as a surrogate for the rank penalty. Intuitively, this shows that minimizing the $R_L$-cost for $L \geq 3$ ought to promote low-rank inner-layer weight matrices $\mW$, and this bias should become more pronounced as $L$ grows. However, the reduced form of the $R_2$-cost in \eqref{eq:R2} suggests that sparsity of the outer-layer weights $\va$ ought to also play a role in determining the $R_L$-cost for $L>2$, yet this dependence is not explicitly revealed in \eqref{eq:nucnormex}.

Part of the difficulty in interpreting the expression for the $R_L$-cost in \eqref{eq:nucnormex} is that it varies under different sets of parameters realizing the same function. In particular, the loss in $\eqref{eq:nucnormex}$ may vary under a trivial rescaling of the weights:  for any vector $\bm\lambda \in \R^K$ with positive entries, by the 1-homogeneity of the ReLU activation  we have  $$\va [\mW\vx + \vb]_+ + c= \va\mD_{\bm\lambda}^{-1}[\mD_{\bm\lambda}\mW\vx + \mD_{\bm\lambda}\vb]_+ + c$$ 
However, the value of the objective in \eqref{eq:nucnormex} may vary between the two parameter sets $\theta = (\mW,\va,\vb,c)$ and $\theta' = (\mD_{\bm\lambda}\mW,\va_{\bm\lambda}\mD_{\bm\lambda}^{-1},\mD_{\bm\lambda}\vb,c)$ realizing the same function.

To account for this scaling invariance, we define a new loss function $\Phi_L$ on pairs of inner- and outer-layer weights $(\mW,\va)$ by optimizing over all such ``diagonal'' rescaling of units:
\begin{align}
\Phi_L(\mW,\va) :=  \inf_{\substack{\bm\lambda \in \R^K\\ \lambda_k > 0,\forall k}} \tfrac{1}{L}\|\va\mD_{\bm\lambda}^{-1}\|_2^2 + \tfrac{L-1}{L}\|\mD_{\bm\lambda}\mW\|_{\Sc^{q}}^{q}
\end{align}
where  $q:=2/(L-1)$.

Since the diagonal rescaling of units does not change the function represented by the network, we may replace the objective in \eqref{eq:nucnormex} with $\Phi_L(\mW,\va)$, which gives us the following equivalent expression for the $R_L$-cost:
\begin{lemma}
For any $f\in N_2(\R^d)$ we have
\begin{equation}
    R_L(f) = \min_{\theta \in \Theta_2} \Phi_L(\mW,\va) ~~s.t.~~f = h^{(2)}_\theta.
\end{equation}
\end{lemma}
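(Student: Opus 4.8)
The plan is to deduce the statement directly from \cref{lem:schatten} via the diagonal rescaling already exhibited in the excerpt. Write $G(\mW,\va) := \tfrac{1}{L}\|\va\|_2^2 + \tfrac{L-1}{L}\|\mW\|_{\Sc^q}^q$ for the objective appearing in \cref{lem:schatten}, so that $R_L(f) = \min_{\theta:\, f = h^{(2)}_\theta} G(\mW,\va)$ and, by definition, $\Phi_L(\mW,\va) = \inf_{\bm\lambda > 0} G(\mD_{\bm\lambda}\mW,\, \va\mD_{\bm\lambda}^{-1})$. The entire argument hinges on the observation that the feasible set $\{\theta \in \Theta_2 : f = h^{(2)}_\theta\}$ is invariant under the map $\theta = (\mW,\va,\vb,c) \mapsto \theta_{\bm\lambda} := (\mD_{\bm\lambda}\mW,\, \va\mD_{\bm\lambda}^{-1},\, \mD_{\bm\lambda}\vb,\, c)$ for any $\bm\lambda > 0$, which is exactly the $1$-homogeneity identity $\va[\mW\vx+\vb]_+ + c = \va\mD_{\bm\lambda}^{-1}[\mD_{\bm\lambda}\mW\vx + \mD_{\bm\lambda}\vb]_+ + c$ recorded just before the definition of $\Phi_L$.

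First I would establish $\min_\theta \Phi_L(\mW,\va) \le R_L(f)$: taking $\bm\lambda = \1$ in the infimum shows $\Phi_L(\mW,\va) \le G(\mW,\va)$ for every $(\mW,\va)$, and minimizing both sides over the feasible set gives $\min_\theta \Phi_L(\mW,\va) \le \min_\theta G(\mW,\va) = R_L(f)$. For the reverse inequality I would fix an arbitrary feasible $\theta$ and an arbitrary $\bm\lambda > 0$; the rescaling invariance shows $h^{(2)}_{\theta_{\bm\lambda}} = h^{(2)}_\theta = f$, so $\theta_{\bm\lambda}$ is again feasible and therefore $G(\mD_{\bm\lambda}\mW,\, \va\mD_{\bm\lambda}^{-1}) \ge R_L(f)$. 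Taking the infimum over $\bm\lambda > 0$ yields $\Phi_L(\mW,\va) \ge R_L(f)$, and since $\theta$ was arbitrary among feasible points, $\min_\theta \Phi_L(\mW,\va) \ge R_L(f)$. Combining the two inequalities proves equality of the optimal values.

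Finally I would justify that the minimum is attained (so that $\min$, not merely $\inf$, is legitimate): letting $\theta^\star$ be a minimizer of $G$ over the feasible set provided by \cref{lem:schatten}, the chain $R_L(f) = \min_\theta \Phi_L(\mW,\va) \le \Phi_L(\mW^\star,\va^\star) \le G(\mW^\star,\va^\star) = R_L(f)$ forces $\Phi_L(\mW^\star,\va^\star) = R_L(f)$, so $\theta^\star$ attains the minimum on the right-hand side as well.

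This proof is largely bookkeeping rather than analysis, so there is no single hard obstacle; the only points requiring care are confirming that restricting the rescaling to $\bm\lambda > 0$ is precisely what makes $\mD_{\bm\lambda}^{-1}$ well defined while still preserving both feasibility and the pointwise bound $\Phi_L \le G$, and verifying the $\inf$-versus-$\min$ interchange through the attainment argument above. Neither step uses properties of the Schatten quasi-norm beyond those already needed in \cref{lem:schatten}, so the result follows almost immediately once the scaling invariance is set up cleanly.
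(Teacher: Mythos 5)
Your argument is correct and is essentially the paper's own (the paper disposes of this lemma in a single sentence, observing that diagonal rescaling leaves the realized function unchanged, so the objective of \cref{lem:schatten} may be replaced by its infimum over such rescalings). Your write-up simply makes that one-line justification rigorous via the two inequalities and the attainment check, so there is nothing to add.
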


Previous work \cite{neyshabur2015norm,neyshabur2017exploring, savarese2019infinite} has shown that in the case of $L=2$ (i.e., a single hidden-layer ReLU network with no additional linear layers), we have
\begin{align}
    \Phi_2(\mW,\va) = \sum_{k=1}^K |a_k|\|\vw_k\|_2.
\end{align}
This has been referred to as the ``path norm'' by \cite{neyshabur2017exploring}. Further constraining $\|\vw_k\|_2=1 \;\forall \; k$, then $\Phi_2(\mW,\va) = \|\va\|_1$, which gives the simplified $R_2$-cost in \eqref{eq:R2}. 

Our results suggest that no such closed-form formula exists for $\Phi_L$ with $L\geq 3$. However, the following lemma $\Phi_L$ for $L\geq 3$ gives a useful further reduction of $\Phi_L$, which is central to the results in \cref{sec:interp} and \cref{sec:example}.
\begin{lemma}\label{lem:phiLinfdef} For any $\mW \in \R^{K\times d}$ and $\va \in \R^K$ we have
\begin{equation}
    \Phi_L(\mW,\va) = \inf_{\substack{\|\bm\lambda\|_2= 1 \\ \lambda_k > 0,\forall k}} \|\mD_{\bm\lambda}^{-1}\mD_{\va}\mW\|_{\Sc^q}^{2/L}
    \label{eq:phi3}
\end{equation}
where $q = 2/(L-1)$.
\end{lemma}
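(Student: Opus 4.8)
The plan is to transform the defining two-term expression for $\Phi_L$ into the single Schatten quasi-norm on the right-hand side through three successive reductions, all hinging on the identity $(L-1)q = 2$ with $q = 2/(L-1)$. First I would dispose of signs and zeros: since $\|\va\mD_{\bm\lambda}^{-1}\|_2^2 = \sum_k a_k^2/\lambda_k^2$ depends only on $|\va|$, and since $\mD_{\bm\lambda}^{-1}\mD_{\va}\mW = \mD_{\sign(\va)}\,\mD_{\bm\lambda}^{-1}\mD_{|\va|}\mW$ differs from $\mD_{\bm\lambda}^{-1}\mD_{|\va|}\mW$ only by left multiplication by the orthogonal diagonal matrix $\mD_{\sign(\va)}$ (which preserves singular values, hence every Schatten quasi-norm), both sides are unchanged if $\va$ is replaced by $|\va|$. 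Units with $a_k = 0$ contribute a zero row to $\mD_{\va}\mW$ and may be discarded, so I may assume $a_k > 0$ for all $k$.

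Second, I would absorb $\va$ into the weight matrix via the change of variables $\widetilde\lambda_k = \lambda_k / a_k$, a bijection of $\{\bm\lambda > 0\}$ with itself. Under it $\|\va\mD_{\bm\lambda}^{-1}\|_2^2 = \|\widetilde{\bm\lambda}^{-1}\|_2^2$ (entrywise reciprocal) and $\mD_{\bm\lambda}\mW = \mD_{\widetilde{\bm\lambda}}\mD_{\va}\mW$, so that
\[
\Phi_L(\mW,\va) = \inf_{\widetilde{\bm\lambda} > 0} \tfrac1L\|\widetilde{\bm\lambda}^{-1}\|_2^2 + \tfrac{L-1}{L}\|\mD_{\widetilde{\bm\lambda}}\mD_{\va}\mW\|_{\Sc^q}^q .
\]
Next I would split $\widetilde{\bm\lambda} = r\bm\rho$ with $r = \|\widetilde{\bm\lambda}\|_2 > 0$ and $\|\bm\rho\|_2 = 1$, $\bm\rho > 0$, and minimize the scalar $r$ in closed form. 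Writing $A = \|\bm\rho^{-1}\|_2^2$ and $B = \|\mD_{\bm\rho}\mD_{\va}\mW\|_{\Sc^q}^q$, the inner objective is $\tfrac{A}{L}r^{-2} + \tfrac{(L-1)B}{L}r^{q}$; differentiating and using $(L-1)q = 2$ gives optimal $r^{q+2} = A/B$ and minimal value $A^{1/L}B^{(L-1)/L}$ (exactly the equality case of weighted AM--GM). Since $q(L-1)/L = 2/L$, this collapses to $\big(\|\bm\rho^{-1}\|_2\,\|\mD_{\bm\rho}\mD_{\va}\mW\|_{\Sc^q}\big)^{2/L}$, leaving an infimum over the positive unit sphere.

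Finally I would remove the stray scalar $\|\bm\rho^{-1}\|_2$ by the reciprocal substitution $\lambda_k = \rho_k^{-1} / \|\bm\rho^{-1}\|_2$. This map sends the positive unit sphere onto itself and is an involution, and it satisfies $\mD_{\bm\lambda}^{-1}\mD_{\va}\mW = \|\bm\rho^{-1}\|_2\,\mD_{\bm\rho}\mD_{\va}\mW$, hence $\|\mD_{\bm\lambda}^{-1}\mD_{\va}\mW\|_{\Sc^q} = \|\bm\rho^{-1}\|_2\,\|\mD_{\bm\rho}\mD_{\va}\mW\|_{\Sc^q}$. Taking infima over the (identical) feasible sets yields $\Phi_L(\mW,\va) = \inf_{\|\bm\lambda\|_2=1,\,\bm\lambda>0}\|\mD_{\bm\lambda}^{-1}\mD_{\va}\mW\|_{\Sc^q}^{2/L}$, as claimed.

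I expect the calculus of the scale minimization to be painless; the main obstacles are bookkeeping rather than depth. Specifically, I must verify that each change of variables is a genuine bijection of the stated feasible domain, so the infima are literally equal (not merely $\leq$), and I must handle the degenerate cases $\va = \vzero$ or $\mW = \vzero$ separately, where the scale optimization has $A = 0$ or $B = 0$ and the represented function is constant. The one genuinely load-bearing idea is the exponent identity $(L-1)q = 2$, which forces both the $\ell^2$ term and the Schatten term into the common power $2/L$ and thereby fuses them into a single quasi-norm; without it the two terms would not combine.
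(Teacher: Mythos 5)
Your proposal is correct and follows essentially the same route as the paper's proof: optimize out the scalar degree of freedom via the weighted AM--GM equality case (your explicit calculus over $r$ is exactly this), absorb $\va$ through the change of variables $\lambda_k \mapsto \lambda_k/a_k$, and then normalize to the positive unit sphere using the homogeneity of $\|\bm\lambda\|_2\,\|\mD_{\bm\lambda}^{-1}\mD_{\va}\mW\|_{\Sc^q}$. Your extra care with signs, zero entries of $\va$, and the explicit reciprocal involution is sound bookkeeping that the paper elides but does not change the argument.
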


Below, we describe some further simplifications of $\Phi_L$ for special configurations of inner-layer weight matrices, and general upper and lower bounds.

The following result shows that for a certain class of $\mW$ matrix, $\Phi_L(\mW,\va)$ reduces to a group sparsity penalty on the vector of outer-layer weights, where groups correspond to clusters of co-linear rows of $\mW$ such that vectors associated with each cluster are mutually orthogonal.

\begin{proposition}\label{prop:orthorows} Suppose each row of $\mW \in \R^{K\times d}$ belongs to a set $\{\pm\vv_1,...,\pm\vv_m\}$ such that $\vv_1,...,\vv_m$ are orthonormal. For all $j=1,..,m$, let $\va_j$ be the vector containing the subset of outer-layer weights corresponding to rows of $\mW$ equal to $\pm\vv_j$. Then we have
\begin{equation}
    \Phi_L(\va,\mW) = \sum_{j=1}^m \|\va_j\|_1^{2/L}.
\end{equation}
\end{proposition}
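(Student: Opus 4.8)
The plan is to start from the reduced formula for $\Phi_L$ in \cref{lem:phiLinfdef}, namely $\Phi_L(\mW,\va) = \inf \|\mD_{\bm\lambda}^{-1}\mD_{\va}\mW\|_{\Sc^q}^{2/L}$ over $\bm\lambda>0$ with $\|\bm\lambda\|_2=1$ and $q=2/(L-1)$, and to exploit the orthonormal structure of the rows to compute the singular values explicitly. Let $S_j = \{k : \vw_k \in \{\pm\vv_j\}\}$ index the rows aligned with $\pm\vv_j$ and set $\mM := \mD_{\bm\lambda}^{-1}\mD_{\va}\mW$. The $k$-th row of $\mM$ equals $(a_k/\lambda_k)$ times $\pm\vv_{j(k)}$, so $\mM^\T\mM = \sum_{j=1}^m c_j\, \vv_j\vv_j^\T$ with $c_j = \sum_{k\in S_j} a_k^2/\lambda_k^2$; because the $\vv_j$ are orthonormal, the $\vv_j\vv_j^\T$ are mutually orthogonal rank-one projections, so the nonzero singular values of $\mM$ are exactly $\sigma_j = \sqrt{c_j}$. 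Substituting and using $2/(Lq) = (L-1)/L$ turns the objective into
\[
\|\mM\|_{\Sc^q}^{2/L} = \left(\sum_{j=1}^m \Big(\sum_{k\in S_j}\tfrac{a_k^2}{\lambda_k^2}\Big)^{q/2}\right)^{(L-1)/L}.
\]

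I would then split the minimization over $\bm\lambda$ into a within-group and an across-group step. Writing $b_j := \|\va_j\|_1$ and $P_j := \sum_{k\in S_j}\lambda_k^2$ (so $\sum_j P_j = \|\bm\lambda\|_2^2 = 1$), Cauchy--Schwarz gives $\sum_{k\in S_j} a_k^2/\lambda_k^2 \ge b_j^2/P_j$, with equality when $\lambda_k^2 \propto |a_k|$ within each group. The remaining across-group bound follows from the pointwise factorization $b_j^{2/L} = (b_j^q P_j^{-q/2})^{(L-1)/L}\,P_j^{1/L}$ (one checks the exponents on $b_j$ and $P_j$ match using $q=2/(L-1)$), followed by Hölder's inequality with conjugate exponents $L/(L-1)$ and $L$:
\[
\sum_{j=1}^m b_j^{2/L} \le \Big(\sum_{j=1}^m b_j^q P_j^{-q/2}\Big)^{(L-1)/L}\Big(\sum_{j=1}^m P_j\Big)^{1/L} = \Big(\sum_{j=1}^m b_j^q P_j^{-q/2}\Big)^{(L-1)/L}.
\]
Chaining the two inequalities shows $\|\mM\|_{\Sc^q}^{2/L} \ge \sum_j \|\va_j\|_1^{2/L}$ for every feasible $\bm\lambda$, giving the lower bound on $\Phi_L$.

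For the matching upper bound I would exhibit the $\bm\lambda$ that saturates both inequalities simultaneously: take $P_j = b_j^{2/L}/Z$ with $Z = \sum_i b_i^{2/L}$ (the equality case of Hölder, i.e. $P_j \propto b_j^{2/L}$) and distribute within each group as $\lambda_k^2 = P_j|a_k|/b_j$ (the equality case of Cauchy--Schwarz); these satisfy $\|\bm\lambda\|_2^2=1$ and yield objective value exactly $Z$, confirming $\Phi_L(\mW,\va)=\sum_j\|\va_j\|_1^{2/L}$.

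The main obstacle I anticipate is the singular-value computation: the identity for $\Phi_L$ is only useful here because orthonormality of the $\vv_j$ collapses $\mM^\T\mM$ into a sum of orthogonal rank-one projections, which is precisely what makes the Schatten norm separate across groups; without orthogonality the singular values would not decouple and no clean closed form would emerge. A secondary technical point is handling degenerate cases — groups with $b_j=0$ (all outer weights zero), or indices with $a_k=0$ that force the optimal $\lambda_k\to 0$ against the strict constraint $\lambda_k>0$ — which I would dispatch by noting the expression is an infimum and passing to the limit.
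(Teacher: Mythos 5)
Your proof is correct and follows essentially the same route as the paper's: both start from \cref{lem:phiLinfdef}, use the orthonormality of the $\vv_j$ to identify the singular values of $\mD_{\bm\lambda}^{-1}\mD_{\va}\mW$ as the per-group quantities $\bigl(\sum_{k\in S_j} a_k^2/\lambda_k^2\bigr)^{1/2}$, and then split the infimum over $\bm\lambda$ into a within-group and an across-group optimization. The only difference is that you carry out those two optimization steps explicitly via Cauchy--Schwarz and H\"older (with the exponents checked correctly), whereas the paper delegates them to a cited lemma of Steinberg; your version is self-contained but otherwise identical in substance.
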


Two extremes of the above proposition are illustrated by the following corollaries:

\begin{corollary}\label{cor:rankone}
Suppose $\mW \in \R^{K\times d}$ is rank-one and has unit-norm rows and  $\va\in\R^K$ is arbitrary. Then
\begin{align}
     \Phi_L(\mW,\va) & = \|\va\|_1^{2/L}.
\end{align}

\end{corollary}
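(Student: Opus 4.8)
The plan is to obtain this as the special case $m=1$ of Proposition~\ref{prop:orthorows}. First I would observe that the hypotheses force a rigid structure on $\mW$: a rank-one matrix has all of its rows lying in a single one-dimensional subspace $\sspan\{\vv_1\}$, and imposing unit-norm rows then forces every row to equal $\pm\vv_1$ for a fixed unit vector $\vv_1$. This is precisely the configuration in Proposition~\ref{prop:orthorows} with the orthonormal system consisting of the single vector $\vv_1$ (so $m=1$). All outer-layer weights then belong to the one group $\va_1 = \va$, and the proposition immediately yields $\Phi_L(\mW,\va) = \|\va_1\|_1^{2/L} = \|\va\|_1^{2/L}$.

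If instead a self-contained argument is preferred, I would compute directly from \cref{lem:phiLinfdef}. Writing each row of $\mW$ as $s_k\vv_1$ with $s_k\in\{\pm1\}$, so that $\mW = \vs\,\vv_1^\T$, the matrix inside the Schatten quasi-norm becomes the rank-one matrix $\mD_{\bm\lambda}^{-1}\mD_{\va}\mW = \vu\,\vv_1^\T$, where $\vu$ has entries $u_k = a_k s_k/\lambda_k$. Since $\vv_1$ is a unit vector, the lone nonzero singular value of $\vu\,\vv_1^\T$ is $\|\vu\|_2$, so its Schatten-$q$ quasi-norm equals $\|\vu\|_2 = \bigl(\sum_k a_k^2/\lambda_k^2\bigr)^{1/2}$, independent of $q$. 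Hence $\Phi_L(\mW,\va) = \inf \bigl(\sum_k a_k^2/\lambda_k^2\bigr)^{1/L}$, the infimum taken over $\bm\lambda>0$ with $\|\bm\lambda\|_2=1$.

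It then remains to evaluate this scalar minimization, which I would handle by substituting $t_k = \lambda_k^2$ (so $t_k>0$ and $\sum_k t_k = 1$) and applying Cauchy--Schwarz: $\|\va\|_1^2 = \bigl(\sum_k (|a_k|/\sqrt{t_k})\sqrt{t_k}\bigr)^2 \le \bigl(\sum_k a_k^2/t_k\bigr)\bigl(\sum_k t_k\bigr) = \sum_k a_k^2/t_k$, with equality when $t_k \propto |a_k|$. Choosing $t_k = |a_k|/\|\va\|_1$ attains the bound, giving infimum $\|\va\|_1^2$ and hence $\Phi_L(\mW,\va) = (\|\va\|_1^2)^{1/L} = \|\va\|_1^{2/L}$.

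The only real obstacle is the degenerate case in which some outer weights $a_k$ vanish: the minimizing $t_k$ would then be $0$, violating the strict positivity constraint $\lambda_k>0$, so the infimum is not attained. I would resolve this with a limiting argument, sending the corresponding $t_k\to 0^+$ and noting that the objective $\sum_k a_k^2/t_k$ simply drops the vanishing terms in the limit, leaving the infimum value $\|\va\|_1^2$ unchanged. Everything else is routine.
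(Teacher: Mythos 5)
Your primary route is exactly the paper's: the corollary is stated there as an immediate special case of \cref{prop:orthorows} with $m=1$, and your observation that rank-one plus unit-norm rows forces every row to be $\pm\vv_1$ is the whole content of that reduction. Your self-contained alternative via \cref{lem:phiLinfdef} is also correct, and the scalar minimization you solve by Cauchy--Schwarz is precisely the paper's \cref{lem:minlam} (proved there by Lagrange multipliers); your handling of the degenerate case $a_k=0$ by a limiting argument is a point the paper glosses over but is needed since the infimum is then not attained.
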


\begin{corollary}\label{cor:orthounits} Suppose the rows of $\mW \in \R^{k\times d}$ are orthonormal and  $\va\in\R^k$ is arbitrary. Then
\begin{align}
     \Phi_L(\mW,\va) =& \|\va\|_{2/L}^{2/L}.
\end{align}
\end{corollary}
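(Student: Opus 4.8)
The plan is to obtain this as an immediate specialization of \cref{prop:orthorows}, by checking that orthonormal rows are precisely the case in which every cluster of co-linear rows is a singleton. First I would verify the hypothesis of \cref{prop:orthorows} holds with $m = k$. Since the $k$ rows $\vw_1,\dots,\vw_k$ of $\mW$ are orthonormal, they are pairwise distinct and no two are negatives of one another: if $\vw_i = -\vw_j$ for some $i\neq j$, then $\vw_i^\T\vw_j = -1 \neq 0$, contradicting orthogonality. Hence I may take the orthonormal directions $\vv_1,\dots,\vv_m$ appearing in \cref{prop:orthorows} to be the rows themselves, so that $m = k$ and each row coincides with a distinct $+\vv_j$.

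Next I would identify the clusters. For each index $j$, the subvector $\va_j$ collecting the outer-layer weights attached to rows equal to $\pm\vv_j$ contains exactly one entry, namely $a_j$, so that $\|\va_j\|_1 = |a_j|$. Substituting into the conclusion of \cref{prop:orthorows} then yields
$$\Phi_L(\mW,\va) = \sum_{j=1}^k \|\va_j\|_1^{2/L} = \sum_{j=1}^k |a_j|^{2/L}.$$

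Finally I would recognize the right-hand side as the $\ell^{2/L}$ quasi-norm raised to the power $2/L$: by definition $\|\va\|_{2/L}^{2/L} = \sum_{j=1}^k |a_j|^{2/L}$, which gives the claimed identity. There is essentially no obstacle here, as the substantive work is already carried out in \cref{prop:orthorows}; the only point requiring (trivial) care is confirming that orthonormality prevents any row from being the negative of another, ensuring each direction $\vv_j$ indexes a singleton cluster rather than a nontrivial group of co-linear rows.
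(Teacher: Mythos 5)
Your proposal is correct and matches the paper's route exactly: the paper presents this corollary as an immediate specialization of \cref{prop:orthorows} in which every cluster is a singleton, so $\|\va_j\|_1 = |a_j|$ and the sum collapses to $\|\va\|_{2/L}^{2/L}$. Your extra check that orthonormality rules out rows being negatives of one another is a harmless (and correct) verification of the hypothesis.
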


Finally, we give some results that are particular to the $L = 3$ layer case. Note that by  \cref{lem:phiLinfdef}, the $\Phi_3$ loss involves minimizing over the nuclear norm of a matrix, which is a convex penalty. This allows us to give the following alternative characterization of $\Phi_3$ by way of convex duality:
\begin{lemma} 
\label{lem:Q}
For any $\mW = [\vw_1~ \vw_2~ ... ~\vw_K]^\T \in \R^{K\times d}$ and $\va \in \R^K$ we have
\begin{equation}\label{eq:Qeq}
\Phi_3(\mW,\va) = \max_{\|\mQ\|_2 \leq 1} \sum_{k=1}^K\left| a_k \langle \vq_k, \vw_k\rangle\right|^{2/3}
\end{equation}
where the dual variable $\mQ = [\vq_1~ \vq_2~ ...~ \vq_K]^\T \in \mathbb{R}^{K\times d}$ has the same dimensions as $\mW$ and $\|\mQ\|_2$ denotes the spectral norm of $\mQ$ (i.e., the maximum singular value of $\mQ$). 
\end{lemma}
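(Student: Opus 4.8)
The plan is to specialize \cref{lem:phiLinfdef} to $L=3$, where $q = 2/(L-1) = 1$ and the Schatten quasi-norm becomes the nuclear norm $\|\cdot\|_*$. Pulling the increasing map $t\mapsto t^{2/3}$ outside the infimum gives $\Phi_3(\mW,\va) = \bigl(\inf_{\bm\lambda}\|\mD_{\bm\lambda}^{-1}\mD_{\va}\mW\|_*\bigr)^{2/3}$, the infimum being over $\bm\lambda$ with $\|\bm\lambda\|_2 = 1$ and $\lambda_k > 0$. I would then evaluate the inner infimum of a nuclear norm by passing to its dual characterization $\|\mM\|_* = \max_{\|\mQ\|_2\le1}\Tr(\mQ^\T\mM)$ and exchanging the resulting $\inf_{\bm\lambda}\max_{\mQ}$ into $\max_{\mQ}\inf_{\bm\lambda}$. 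Writing $\mM := \mD_{\bm\lambda}^{-1}\mD_{\va}\mW$, whose $k$-th row is $(a_k/\lambda_k)\vw_k^\T$, the pairing is $\Tr(\mQ^\T\mM) = \sum_{k=1}^K \tfrac{a_k}{\lambda_k}\langle\vq_k,\vw_k\rangle$, so the problem becomes $\inf_{\bm\lambda}\max_{\mQ}\sum_k \tfrac{a_k}{\lambda_k}\langle\vq_k,\vw_k\rangle$.

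The key preparatory step is to restrict the dual variable to $\Omega := \{\mQ : \|\mQ\|_2\le1,\ a_k\langle\vq_k,\vw_k\rangle\ge0\ \forall k\}$. This is legitimate because negating a single row of $\mQ$ leaves its spectral norm unchanged (it is left-multiplication by an orthogonal diagonal sign matrix), so for every $\bm\lambda>0$ the maximizer over the spectral ball can be taken in $\Omega$, where the objective equals $\sum_k\tfrac{1}{\lambda_k}|a_k\langle\vq_k,\vw_k\rangle|$. The payoff is that on $\Omega$ all coefficients $|a_k\langle\vq_k,\vw_k\rangle|$ are nonnegative, so the objective is simultaneously linear (hence concave) in $\mQ$ and convex in $\bm\lambda$ (each $1/\lambda_k$ is convex on $\lambda_k>0$). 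I would also relax the nonconvex constraint $\|\bm\lambda\|_2=1$ to the convex set $\{\|\bm\lambda\|_2\le1,\ \bm\lambda>0\}$, which changes nothing: $\bm\lambda\mapsto\|\mM\|_*$ is homogeneous of degree $-1$, so its infimum over the ball is attained on the sphere.

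With these reductions Sion's minimax theorem applies: $\Omega$ is convex and compact, the $\bm\lambda$-domain is convex, and the objective has the needed concave/convex and semicontinuity properties, so $\inf_{\bm\lambda}$ and $\max_{\mQ\in\Omega}$ may be exchanged. For fixed $\mQ\in\Omega$ the inner problem $\inf\{\sum_k \tfrac{1}{\lambda_k}c_k : \|\bm\lambda\|_2=1,\ \bm\lambda>0\}$ with $c_k := |a_k\langle\vq_k,\vw_k\rangle|\ge0$ is solved by a one-line Lagrange-multiplier (equivalently weighted AM-GM) computation giving $\lambda_k\propto c_k^{1/3}$ and optimal value $\bigl(\sum_k c_k^{2/3}\bigr)^{3/2}$. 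Substituting back yields $\inf_{\bm\lambda}\|\mM\|_* = \max_{\mQ\in\Omega}\bigl(\sum_k|a_k\langle\vq_k,\vw_k\rangle|^{2/3}\bigr)^{3/2}$; since the summand depends only on the sign-invariant quantities $|a_k\langle\vq_k,\vw_k\rangle|$, the maximum over $\Omega$ equals the maximum over the full spectral ball, and taking the $2/3$-power gives \eqref{eq:Qeq}.

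The step I expect to be the main obstacle is justifying the $\inf$--$\max$ exchange, since the two features that break a naive minimax argument — the sign-indefinite coefficients $a_k\langle\vq_k,\vw_k\rangle$, which destroy convexity in $\bm\lambda$, and the nonconvex, noncompact domain $\{\|\bm\lambda\|_2=1,\ \bm\lambda>0\}$ — must both be removed first. The restriction to $\Omega$ is exactly what reconciles them: it is the region where the linear form and its absolute-value counterpart agree, simultaneously restoring concavity in $\mQ$ and convexity in $\bm\lambda$, while the homogeneity argument tames the domain and Sion's theorem needs compactness only on the max side. If one prefers to avoid invoking a minimax theorem, the ``$\ge$'' direction follows from plain weak duality (bound $\|\mM(\bm\lambda)\|_*$ below by its pairing with any fixed feasible $\mQ$, then take $\inf_{\bm\lambda}$), leaving only the reverse ``$\le$'' inequality, for which Sion or an explicit primal-dual pair satisfying the stationarity conditions above is the cleanest route.
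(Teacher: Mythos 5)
Your proposal is correct and follows essentially the same route as the paper's proof: specialize \cref{lem:phiLinfdef} to $L=3$, dualize the nuclear norm against the spectral ball, swap $\inf_{\bm\lambda}$ and $\max_{\mQ}$ via Sion's theorem, and solve the inner problem over $\bm\lambda$ by a Lagrange-multiplier/AM-GM computation (the paper cites Lemma 3.1 of Steinberg for this last step). Your restriction to the sign-corrected set $\Omega$ and the relaxation of the sphere to the ball are in fact a more careful justification of the minimax exchange than the paper provides, but the argument is the same in substance.
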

The benefit of \cref{lem:Q} is that it allows us to easily generate lower bounds for $\Phi_3(\mW,\va)$, simply by evaluating the objective in \eqref{eq:Qeq} at any matrix $\mQ$ with $\|\mQ\|_2\leq 1$.

Next, we give an upper-bound for $\Phi_3$ that quantifies the interplay between low-rankness of the inner-layer weight matrix and the sparsity of the outer-layer weights:
\begin{theorem} \label{thm:UB}
Suppose $\mW \in \R^{K\times d}$ 
is a rank-$r$ matrix, and let $\mW = \mU\bm\Sigma\mV^\T$ be a (thin) SVD, such that $\mU \in \R^{K\times r}$, $\bm\Sigma = \text{diag}(\sigma_1,...,\sigma_r) \in \R^{r\times r}$, and $\mV \in \R^{d\times r}$. Let $\va \in \R^K$ be arbitrary. Then
\begin{subequations}
\begin{align}
\Phi_3(\mW,\va) & \leq \sum_{j=1}^r \left( \sigma_j \sum_{k=1}^K |a_k u_{k,j}|\right)^{2/3} \label{eq:phi3ub}
\\
& = \sum_{j=1}^r \left(\sum_{k=1}^K |a_k \langle \vw_k,\vv_j\rangle|\right)^{2/3}
\end{align}
\end{subequations}
Furthermore, equality holds when $\mW$ satisfies the conditions of \cref{prop:orthorows}.
\end{theorem}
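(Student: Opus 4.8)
The plan is to prove the bound \eqref{eq:phi3ub} by combining the SVD of $\mW$ with a subadditivity property of $\Phi_3$, and then to read off the equality case directly from \cref{prop:orthorows}. Throughout I specialize to $L=3$, so that $q=2/(L-1)=1$ and $\|\cdot\|_{\Sc^q}=\|\cdot\|_*$ is the \emph{convex} nuclear norm; this convexity is what makes the argument go through and is the reason the statement is confined to $\Phi_3$. Writing the thin SVD as $\mW=\sum_{j=1}^r\mW_j$ with rank-one pieces $\mW_j=\sigma_j\vu_j\vv_j^\T$ (here $\vu_j$ is the $j$-th column of $\mU$, so that $\langle\vw_k,\vv_j\rangle=\sigma_j u_{k,j}$, which immediately gives the equivalence of the two displayed upper-bound expressions in the statement), the target reduces to the two claims $\Phi_3(\mW,\va)\le\sum_{j=1}^r\Phi_3(\mW_j,\va)$ and $\Phi_3(\mW_j,\va)=\big(\sigma_j\sum_k|a_k u_{k,j}|\big)^{2/3}$.

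The heart of the proof --- and the step I expect to be the main obstacle --- is the subadditivity inequality $\Phi_3(\sum_j\mW_j,\va)\le\sum_j\Phi_3(\mW_j,\va)$, where crucially the outer weights $\va$ are \emph{shared} across all summands. I would prove it from the variational definition of $\Phi_3$ (the $L=3$ case of the definition of $\Phi_L$). Fix near-optimal rescalings $\bm\mu^{(j)}>0$ for each $\Phi_3(\mW_j,\va)$ and define a single rescaling $\bm\lambda$ for $\mW$ by the harmonic-type combination $\lambda_k^{-2}=\sum_{j=1}^r(\mu_k^{(j)})^{-2}$. This choice is engineered to do two things at once: first, $\tfrac13\|\va\mD_{\bm\lambda}^{-1}\|_2^2=\sum_j\tfrac13\|\va\mD_{\bm\mu^{(j)}}^{-1}\|_2^2$ exactly, so the outer-weight terms add up with no loss; second, $\lambda_k\le\mu_k^{(j)}$ for every $j$ and $k$. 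The second fact, together with subadditivity of the nuclear norm and the identity $\|\mD_{\bm\lambda}\mW_j\|_*=\sigma_j\|\mD_{\bm\lambda}\vu_j\|_2$ for rank-one $\mW_j$, gives $\|\mD_{\bm\lambda}\mW\|_*\le\sum_j\|\mD_{\bm\lambda}\mW_j\|_*\le\sum_j\|\mD_{\bm\mu^{(j)}}\mW_j\|_*$, since $\|\mD_{\bm\lambda}\vu_j\|_2$ is monotone in each $\lambda_k$. Plugging $\bm\lambda$ into the definition of $\Phi_3(\mW,\va)$ and summing the two bounds yields the claim after taking the infimum over each $\bm\mu^{(j)}$. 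The delicate point is precisely the simultaneous balancing achieved by the harmonic combination; a naive shared $\bm\lambda$, or the cruder estimate $\|\mD_{\bm\lambda}\vu_j\|_2\le\sum_k\lambda_k|u_{k,j}|$, overshoots the target.

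For the rank-one evaluation I would use that $\Phi_3$ is invariant under the rescaling $(\mW,\va)\mapsto(\mD_{\bm\rho}\mW,\mD_{\bm\rho}^{-1}\va)$ (immediate from the infimum over $\bm\lambda$ in its definition). Absorbing the row magnitudes $\sigma_j|u_{k,j}|$ of $\mW_j$ into the outer weights turns $\mW_j$ into a rank-one matrix with unit-norm rows, so \cref{cor:rankone} applies and gives $\Phi_3(\mW_j,\va)=\|\mD_{\bm c}\va\|_1^{2/3}$ with $c_k=\sigma_j|u_{k,j}|$, i.e.\ exactly $\big(\sigma_j\sum_k|a_k u_{k,j}|\big)^{2/3}$; degenerate rows with $u_{k,j}=0$ are simply dropped. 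Finally, for the equality case, when $\mW$ satisfies the hypotheses of \cref{prop:orthorows} its right singular vectors are (a relabeling of) the orthonormal $\vv_1,\dots,\vv_m$, and $\langle\vw_k,\vv_j\rangle\in\{0,\pm1\}$, so the right-hand side of \eqref{eq:phi3ub} collapses to $\sum_j\|\va_j\|_1^{2/3}$. Since \cref{prop:orthorows} evaluates $\Phi_3$ to exactly this quantity, the upper bound is attained, completing the argument.
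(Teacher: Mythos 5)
The paper's appendix, despite being titled ``Proofs of Results in Section 3,'' does not actually contain a proof of \cref{thm:UB}, so there is no argument of the authors' to compare yours against; judged on its own terms, your proof is correct and complete. The decomposition $\mW=\sum_{j}\sigma_j\vu_j\vv_j^\T$, the subadditivity $\Phi_3\bigl(\sum_j\mW_j,\va\bigr)\le\sum_j\Phi_3(\mW_j,\va)$, and the rank-one evaluation via the row-rescaling invariance plus \cref{cor:rankone} combine to give exactly \eqref{eq:phi3ub}, and the harmonic combination $\lambda_k^{-2}=\sum_j\bigl(\mu_k^{(j)}\bigr)^{-2}$ is the right device: it makes the outer-weight terms add with no loss while forcing $\lambda_k\le\mu_k^{(j)}$, which suffices because $\|\mD_{\bm\lambda}\mW_j\|_*=\sigma_j\|\mD_{\bm\lambda}\vu_j\|_2$ is coordinatewise monotone for rank-one summands; the equality case then reads off from \cref{prop:orthorows} as you describe. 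Two minor remarks. First, your claim that convexity of the nuclear norm is what confines the argument to $L=3$ is not quite right: the quasi-norm triangle inequality $\|\mA+\mB\|_{\Sc^q}^q\le\|\mA\|_{\Sc^q}^q+\|\mB\|_{\Sc^q}^q$ holds for all $0<q\le 1$ (Rotfel'd/McCarthy), so your construction extends verbatim to yield $\Phi_L(\mW,\va)\le\sum_{j=1}^r\bigl(\sigma_j\sum_k|a_ku_{k,j}|\bigr)^{2/L}$ for every $L\ge 3$; the restriction to $L=3$ in the theorem is a choice of emphasis, not a necessity of the method. Second, when singular values repeat the right-hand side of \eqref{eq:phi3ub} depends on which thin SVD is chosen; your argument correctly establishes the bound for \emph{any} choice, and the equality claim should be understood as referring to the natural SVD whose right singular vectors are the orthonormal directions $\vv_1,\dots,\vv_m$ of \cref{prop:orthorows}.
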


Recall the definition of the “entry-wise” $\ell_{p,q}$ norm of a matrix $\mB \in \mathbb{R}^{m \times n}$:
$$
\|\mB\|_{p,q} = \left(  \sum_{j=1}^n \left( \sum_{i = 1}^m  |b_{i,j}|^p\right)^{q/p}\right)^{1/q};
$$
a classic example is $\|\mB\|_{2,1}$ used for group-sparse regularization in sparse coding \cite{yuan2006model}. Note that \cref{prop:orthorows} is equivalent to  
$$
\Phi(\mW,\va) \le \|\mB\|_{1,\frac{2}{3}}^{\frac{2}{3}}  \text{ where }  \mB:= \mD_{\va} \mW \mV = \mD_{\va} \mU \bm\Sigma
$$
and $\mD_{\va} \in \mathbb{R}^{K \times K}$ is a diagonal matrix with the entries of $\va$ along the diagonal.
This framing highlights that the representation cost depends not only on the alignment of the ReLU units with one another (as represented by the diagonal elements of $\mathbf{\Sigma}$ and entries of $\mU$), but also the sparsity of the weights on them (i.e. the sparsity of $\va$).

\section{Minimal $R_L$ Interpolating Solutions}\label{sec:interp}

Minimum $R_L$-cost interpolants of a finite set of data can reveal important features of representation costs and their impacts. In overparameterized neural networks, there are typically many possible interpolants, and representation costs guide which of those interpolants would be selected when we fit the data using weight decay. In this section, we consider two key settings: (a) when the training features are supported on a subspace, and (b) when the training features are not supported on a subspace, but an interpolant exists which is a function of the \textit{projection} of the features onto a subspace. 
The latter case is typical of overparameterized settings.
All proofs of results in this section are given in  \cref{app:mincost}.

\subsection{Training features contained in a subspace}\label{subsec:onesub}
We prove that in the special case where the training features are entirely contained in a subspace, every minimum $R_L$-cost interpolating  solution must depend on only the projection of features onto that subspace:
\begin{proposition}\label{prop:colineardata}
Let $\Sc \subset \R^d$ denote the subspace spanned by the training features $\{\vx_i\}_{i=1}^n$. Given any set of training labels $\{y_i\}_{i=1}^n$, let $f$ be any minimum $R_L$-cost interpolating solution for any $L\geq 2$. Then 
\[
f(\vx) = f(\mP_{\Sc}\vx)
\]
for all $\vx\in\R^d$, where $\mP_{\Sc}$ is the orthogonal projector onto $\Sc$.
\end{proposition}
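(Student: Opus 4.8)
The plan is to argue that any cost-minimizing parameter realization of $f$ must have a first linear layer that ignores the orthogonal complement of $\Sc$, so the function it represents cannot vary in those directions. The crucial structural observation is that, among all the weight matrices, only the input layer $\mW_1$ (which for $L=2$ is the single inner matrix $\mW$) acts directly on $\vx$; everything downstream sees $\vx$ only through the product $\mW_1\vx$. This makes the argument uniform in $L\geq 2$.

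First I would fix a minimum $R_L$-cost interpolant $f$ together with a parameter vector $\theta=(\mW_1,\dots,\mW_{L-1},\va,\vb,c)\in\Theta_L$ attaining the representation cost, so that $h_\theta^{(L)}=f$ and $C_L(\theta)=R_L(f)$. I then define $\theta'$ to be identical to $\theta$ except that its first layer is replaced by $\mW_1\mP_{\Sc}$. Since each training feature lies in $\Sc$, we have $\mP_{\Sc}\vx_i=\vx_i$, so $\mW_1\mP_{\Sc}\vx_i=\mW_1\vx_i$ for every $i$; as all downstream weights are unchanged, $g:=h_{\theta'}^{(L)}$ interpolates the same data. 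The cost comparison rests on the Pythagorean identity
\[
\|\mW_1\|_F^2=\|\mW_1\mP_{\Sc}\|_F^2+\|\mW_1(\mI-\mP_{\Sc})\|_F^2,
\]
valid because $\mP_{\Sc}$ is a symmetric idempotent with $\mP_{\Sc}(\mI-\mP_{\Sc})=\mathbf{0}$. Since no other term of the cost is affected, this gives $C_L(\theta')=C_L(\theta)-\tfrac1L\|\mW_1(\mI-\mP_{\Sc})\|_F^2\le R_L(f)$.

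The main step is then to upgrade this inequality to an equality that pins down $\mW_1$. Because $g$ interpolates, $R_L(g)\le C_L(\theta')$; because $f$ is a \emph{minimum}-cost interpolant, $R_L(f)\le R_L(g)$. Chaining these yields $R_L(f)\le R_L(g)\le C_L(\theta')\le C_L(\theta)=R_L(f)$, so every inequality is an equality. In particular $C_L(\theta')=C_L(\theta)$, which by the Pythagorean identity forces $\|\mW_1(\mI-\mP_{\Sc})\|_F=0$, i.e.\ $\mW_1=\mW_1\mP_{\Sc}$. Substituting this back, $f$ depends on $\vx$ only through $\mW_1\vx=\mW_1\mP_{\Sc}\vx$, and by idempotence of $\mP_{\Sc}$ this coincides with the value at the input $\mP_{\Sc}\vx$, yielding $f(\vx)=f(\mP_{\Sc}\vx)$ for all $\vx\in\R^d$.

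I expect the main obstacle to be conceptual rather than computational: the statement constrains the \emph{given} minimizer $f$, not merely the existence of some cheaper projected competitor, so it is essential that the cost cannot strictly decrease under the projection. The strict Pythagorean splitting supplies exactly this, converting optimality into the hard constraint $\mW_1(\mI-\mP_{\Sc})=\mathbf{0}$ on every cost-achieving realization of $f$. A minor technical point I would confirm is that the minimum defining $R_L(f)$ is attained, so that a cost-achieving $\theta$ exists on which to run the construction.
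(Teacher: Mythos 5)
Your argument is correct and is essentially the paper's own proof: both replace the first-layer matrix $\mW_1$ by $\mW_1\mP_{\Sc}$, observe that interpolation is preserved since $\mP_{\Sc}\vx_i=\vx_i$, and use the Pythagorean identity $\|\mW_1\|_F^2=\|\mW_1\mP_{\Sc}\|_F^2+\|\mW_1(\mI-\mP_{\Sc})\|_F^2$ to force $\mW_1(\mI-\mP_{\Sc})=\mathbf{0}$ at any cost-achieving realization of a minimum-cost interpolant. Your chain-of-equalities phrasing (and the remark about attainment of the minimum defining $R_L$) is a cosmetic variant of the paper's proof by contradiction, not a different route.
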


More generally, since the representation cost is translation invariant, the above result extends to the case where the training features span an affine subspace $\mathcal{A} := \{\vv + \vx : \vx \in \mathcal{S}\}$ where $\vv \in \Sc^\perp$, in which case we have $f(\vx) = f(\mP_{\Sc}\vx + \vv)$.

The above proposition implies that any minimum $R_L$-cost interpolant $f$ will have all its units aligned with ${\cal S}$; i.e., every inner-layer weight vector $\vw_k \in {\cal S} \;\forall\; k$. Thus, $f$ will be constant in directions orthogonal to ${\cal S}$. 

Specializing this result to the case of training features constrained to a one-dimensional subspace, we see that all minimum $R_L$-cost interpolants must have units aligned along the subspace (rank-one inner-layer weight matrix). Combined with \cref{cor:rankone}, this gives the immediate corollary:
\begin{corollary}\label{prop:colineardata2}
If the training features $\vx_i$ are co-linear (i.e., there exist vectors $\vu,\vv \in \R^d$ such that $\vx_i = t_i\vu + \vv$ for some scalars $t_i$), then given any set of training labels $\{y_i\}_{i=1}^n$, the collection of minimum $R_L$-cost interpolating solutions is identical for all $L\geq 2$. Furthermore, every such minimizer $f$ has aligned units, meaning it can be written in the form $f(\vx) = \sum_{k=1}^K a_k [s_k\vu^\T\vx +b_k]_+ + c$, where $s_k =\pm 1$.
\end{corollary}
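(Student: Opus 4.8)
The plan is to combine the alignment forced by \cref{prop:colineardata} with the closed form of $\Phi_L$ on rank-one weight matrices from \cref{cor:rankone}, and then to exploit the strict monotonicity of $t\mapsto t^{2/L}$ to conclude that the set of minimizers cannot depend on $L$.

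First I would reduce to the linear-subspace setting. Since $\|\vu\|_2$ can be absorbed by rescaling and the component of $\vv$ along $\vu$ can be folded into the scalars $t_i$, I would write $\vx_i = t_i'\vu + \vv^\perp$ with $\|\vu\|_2 = 1$ and $\vv^\perp \perp \vu$, so that the features lie on the affine line with direction $\Sc := \sspan(\vu)$ and offset $\vv^\perp \in \Sc^\perp$. The affine version of \cref{prop:colineardata} then says that every minimum $R_L$-cost interpolant $f$ (for each fixed $L \geq 2$) satisfies $f(\vx) = f(\mP_\Sc\vx + \vv^\perp)$ and, in its cost-achieving representation, has all inner-layer weights $\vw_k \in \Sc$, i.e.\ $\vw_k = \rho_k\vu$ for scalars $\rho_k$. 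Writing $\rho_k = s_k|\rho_k|$ with $s_k = \pm 1$ and absorbing $|\rho_k|$ into the outer weight via $1$-homogeneity of the ReLU, I obtain the claimed form $f(\vx) = \sum_k a_k[s_k\vu^\T\vx + b_k]_+ + c$; this settles the structural part of the statement.

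Next I would establish that the minimizer set is $L$-independent. For any aligned representation normalized to unit-norm rows $\pm\vu$, the matrix $\mW$ is rank-one with unit-norm rows, so \cref{cor:rankone} gives $\Phi_L(\mW,\va) = \|\va\|_1^{2/L}$. Because every minimizer is aligned, the function-space problem \eqref{eq:opt2} has the same minimizing functions as the parameter-space problem of minimizing $\|\va\|_1^{2/L}$ over all aligned interpolating representations, a feasible set that does not depend on $L$. Since $t\mapsto t^{2/L}$ is a strictly increasing bijection of $[0,\infty)$, the set of optimal parameters, and hence the set of optimal functions, coincides with that obtained by minimizing $\|\va\|_1$, which is exactly the $L=2$ case by the reduced $R_2$-cost \eqref{eq:R2}. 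Equivalently, for aligned $f$ one has $R_L(f) = [R_2(f)]^{2/L}$, and applying monotonicity to $\argmin_f [R_2(f)]^{2/L} = \argmin_f R_2(f)$ yields the claim.

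I expect the only delicate point to be the interface between the function-level statement of \cref{prop:colineardata} and the parameter-level use of \cref{cor:rankone}: I must ensure that the representation of a minimizer in which $\Phi_L$ attains the value $R_L(f)$ is itself aligned, so that the corollary applies to it, rather than merely that $f$ admits some aligned representation. This is precisely the content of the remark following \cref{prop:colineardata} that each $\vw_k \in \Sc$ in an optimal representation, so no additional argument is needed; degenerate units with $\rho_k = 0$ contribute only a constant, are folded into $c$, and do not affect $\|\va\|_1$ at the optimum.
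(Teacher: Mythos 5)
Your proposal is correct and follows essentially the same route as the paper's proof: reduce to a one-dimensional linear subspace, invoke \cref{prop:colineardata} to force the optimal representation's inner-layer weights into $\sspan(\vu)$ (hence rank-one $\mW$), apply \cref{cor:rankone} to get $\Phi_L(\mW,\va)=\|\va\|_1^{2/L}$, and use strict monotonicity of $t\mapsto t^{2/L}$ to conclude the minimizer sets coincide with the $L=2$ case. Your explicit attention to the interface between the function-level alignment statement and the parameter-level application of \cref{cor:rankone} is a careful touch that the paper's proof handles only implicitly, but it does not change the argument.
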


The above results do not depend on the number of linear layers. However, next we show there are settings where minimum $R_L$-cost interpolating solutions differ for $L=2$ and $L=3$.

\subsection{Representations supported on a subspace}
\label{sec:subspaces}
Suppose that training samples may be interpolated by a function
\begin{align}
    f_*(\vx) =& \sum_{k=1}^K a_k[\vw_k^\top \vx + b_k]_+ + c,
\end{align}
and assume that $\|\vw_k\| = 1 \; \forall \; k$. Given a subspace ${\cal S}$ and corresponding orthogonal projection operator $\mP_{\cal S}$ where $\mP_{\cal S} \vw_k \neq 0 \; \forall \; k$, we construct the function 
\begin{align}
    g_{\cal S}(\vx) =& \sum_{k=1}^K \tilde a_k [\tilde \vw_k ^\top \vx + \tilde b_k]_+ + c
\end{align}
where
\begin{align}
    \tilde \vw_k := \frac{\mP_{\cal S} \vw_k}{\|\mP_{\cal S} \vw_k\|_2},\qquad \tilde a_k := \frac{a_k}{r_k},\qquad
\tilde b_k := \frac{b_k a_k}{\tilde a_k},
\end{align}
and $r_k$ is defined as follows. Let $\mX_k \in \R^{d \times n_k}$ be a matrix of the $n_k \le n$ training samples that are ``active'' under ReLU unit $k$ -- that is,  $\vx_i$ is a column of $\mX_k$ if $\vw_k^\top \vx_i + b_k > 0$. Further define $\Sigma_k := \mX_k \mX_k^\top$. Then we define
\begin{align}
    r_k := \frac{\vw_k^\top \Sigma_k\mP_{\cal S} \vw_k}{\vw_k^\top \Sigma_k \vw_k \|\mP_{\cal S} \vw_k\|_2}.
\end{align}
If $r_k \neq 0 \;\forall \; k$, then, by construction, $g_{\cal S}$ interpolates the training samples since,
$$
a_k[\vw_k^\top \vx_i + b_k]_+ = \tilde a_k[\tilde \vw_k^\top \vx_i + \tilde b_k]_+ \;\forall\;i,k.
$$
Note that all $\tilde \vw_k \in {\cal S}$, but, unlike the setting in \cref{prop:colineardata}, we do \textit{not} assume that the training samples all lie in the subspace. 

Given this construction, we have
\begin{align}
    R_2(f_*) =& \sum_{k=1}^K |a_k| = \sum_{k=1}^K  |\tilde a_k r_k| \\
    R_2(g_{\cal S}) =& \sum_{k=1}^K  \left|\frac{a_k}{r_k}\right| = \sum_{k=1}^K  |\tilde a_k|.
\end{align}
We may then conclude that 
\[
R_2(f_*) \le R_2(g_{\cal S})
\]
whenever
\begin{align}
\left|
\frac{\vw_k^\top \Sigma_k\mP_{\cal S} \vw_k}{\vw_k^\top \Sigma_k \vw_k }
\right| \le \|P_{{\cal S}} \vw_k\|_2 \;\forall \; k. \label{eq:R2_r}
\end{align}
In other words,
the samples in $\mX_k$ must be more closely aligned with $\vw_k$ than $\mP_{\cal S}\vw_k$, and the ratio of these alignments must be bounded by how much of $\vw_k$'s energy is in ${\cal S}$. In this case, even though $g_{\cal S}$ is an interpolating function, it may not correspond to the minimum $R_2$ interpolant when the feature vectors are not in the subspace $\cal S$ (in contrast to the setting in \cref{subsec:onesub} where, when the samples all lie in ${\cal S}$, the minimum $R_2$ interpolant will have all inner-layer weight vectors in ${\cal S}$). 

As a special case, imagine $\mX_k = \vw_k \vc_k^\top$ -- that is, all training samples that activate ReLU unit $k$ lie along the subspace spanned by $\vw_k$, making $\rank(\mX_k)=1$, and let $\vw$ be an orthonormal basis for a one-dimensional ${\cal S}$. (This special case is examined in detail in \cref{sec:rays}.) In this case, the condition in \eqref{eq:R2_r} is always satisfied.

To understand the three-layer representation cost, 
we use \cref{lem:phiLinfdef} and compare 
\begin{align} 
\Phi_3(\mW,\va) = \inf_{\substack{\|\bm\lambda\|_2= 1 \\ \lambda_k > 0,\forall k}} \|\mD_{\vlambda}^{-1}\mD_{\va}\mW\|_*^{2/3}
\label{eq:phi3orig}
\end{align}
with
\begin{align}
\Phi_3(\tilde \mW,\tilde \va) = \inf_{\substack{\|\bm\lambda\|_2= 1 \\ \lambda_k > 0,\forall k}} \|\mD_{\vlambda}^{-1}\mD_{\tilde \va}\tilde \mW\|_*^{2/3}
\end{align}
corresponding to $R_3(f_*)$ and $R_3(g_{\cal S})$, respectively.
Define
\begin{align}
    q_k :=& r_k \|\mP_{\cal S} \vw_k\| = \frac{\vw_k^\top \Sigma_k\mP_{\cal S} \vw_k}{\vw_k^\top \Sigma_k \vw_k}
\end{align}
and $    \vq := (
    q_1, \ldots, q_K
   )^\top.$
Then 
\begin{align}
  \Phi_3(\tilde \mW,\tilde \va) = \inf_{\substack{\|\bm\lambda\|_2= 1 \\ \lambda_k > 0,\forall k}}  \| \mD_{\vq}^{-1} ( \mD_{\vlambda}^{-1} \mD_{\va} \mW) \mP_{\cal S}
\|_*^{2/3}.  \label{eq:phi3tilde}
\end{align}

Differences in the $R_3$ representation costs associated with a two-layer ReLU network 
\eqref{eq:phi3orig} and a two-layer ReLU network with an additional linear input layer \eqref{eq:phi3tilde} highlight the importance of \textit{both} the alignment of the ReLU units (via $\mP_{\cal S}$) and  their scales (via $\mD_{\vq}^{-1}$).  Specifically, the $\mP_{\cal S}$ factor ensures the product has $\rank = \dim(\cal S)$, and so the nuclear norm in 
\eqref{eq:phi3tilde} will often be smaller than that in \eqref{eq:phi3orig}. This is consistent with the intuition that an interpolating network with all ReLU units aligned with a low-dimensional subspace should have a smaller $R_3$ representation cost. However, despite this intuition, we show 
this is not always the case, and the vector $\vq$, which captures the alignment of the training data with the subspace $\cal S$ \textit{vis-\`{a}-vis} the interpolating function $f_*$, can sometimes result in 
\eqref{eq:phi3orig} being smaller than   \eqref{eq:phi3tilde}.

The expression in \eqref{eq:phi3tilde} does not admit a general analytic simplification. However, it may be computed exactly in some special cases, computed numerically, upper bounded using \cref{thm:UB}, and lower bounded using \cref{lem:Q} with any $Q$. In \cref{sec:example}, we highlight a special case in which \eqref{eq:phi3tilde} admits a simple analytical expression, allowing us to characterize the conditions under which the representation cost for the three-layer network is smaller when the ReLU units are aligned.

\section{Examples illustrating ReLU alignment}
\label{sec:example}

\subsection{Networks with ReLU weights of similar magnitudes}
Suppose we have two networks that interpolate the training data with the same $R_2$-cost, but one network has all its units aligned, and the other does not. Then the following result shows that the network with aligned units always has strictly lower $R_3$-cost.

\begin{proposition}\label{prop:rankoneR3}
Suppose $f$ and $g$ are such that $R_2(f) = R_2(g)$, i.e., $f$ and $g$ can be described by inner-layer and outer-layer weight pairs $(\mW_1,\va_2)$ and $(\mW_2,\va_2)$, respectively, where both $\mW_1$ and $\mW_2$ have unit-norm rows, and $\|\va_1\|_1 = \|\va_2\|_1$. If $\mW_1$ has rank greater than one, while $\mW_2$ is rank one, then $R_3(g) < R_3(f)$.
\end{proposition}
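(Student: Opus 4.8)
The plan is to bound the two three-layer costs through the reduced loss $\Phi_3$ and to exploit the hypothesis that the two representations have equal $\ell^1$ outer weights. For the aligned network $g$ I would first pin down $R_3(g)$ exactly: since $\mW_2$ is rank one with unit-norm rows, \cref{cor:rankone} gives $\Phi_3(\mW_2,\va_2)=\|\va_2\|_1^{2/3}$, so that $R_3(g)=\min_\theta\Phi_3(\mW,\va)\le\|\va_2\|_1^{2/3}$; the matching lower bound derived below then forces equality. For $f$ the goal is a companion lower bound $R_3(f)\ge\|\va_1\|_1^{2/3}$ that is moreover \emph{strict}, after which $\|\va_1\|_1=\|\va_2\|_1$ yields $R_3(g)<R_3(f)$.

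The key estimate is a lower bound on $\Phi_3$ valid for \emph{any} representation with unit-norm rows. Writing $\mM_{\bm\lambda}=\mD_{\bm\lambda}^{-1}\mD_{\va}\mW$ as in \cref{lem:phiLinfdef}, its $k$-th row is $(a_k/\lambda_k)\vw_k$, of Euclidean norm $|a_k|/\lambda_k$. I would chain three inequalities: the nuclear norm dominates the Frobenius norm, $\|\mM_{\bm\lambda}\|_*\ge\|\mM_{\bm\lambda}\|_F=(\sum_k a_k^2/\lambda_k^2)^{1/2}$; and Cauchy--Schwarz against the constraint $\sum_k\lambda_k^2=1$ gives $(\sum_k a_k^2/\lambda_k^2)(\sum_k\lambda_k^2)\ge(\sum_k|a_k|)^2$, so $\|\mM_{\bm\lambda}\|_F\ge\|\va\|_1$. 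Taking the infimum over $\bm\lambda$ and then the $2/3$ power yields $\Phi_3(\mW,\va)\ge\|\va\|_1^{2/3}$. Since $\Phi_3$ is invariant under diagonal rescaling, minimizing over \emph{all} representations of a fixed function equals minimizing over unit-row ones, and invoking the $R_2$ formula \eqref{eq:R2} upgrades this to $R_3(f)\ge R_2(f)^{2/3}$ and likewise $R_3(g)\ge R_2(g)^{2/3}$; the latter meets the upper bound above, confirming $R_3(g)=\|\va_2\|_1^{2/3}=R_2(f)^{2/3}$.

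It remains to make the bound for $f$ strict. I would track the two equality conditions: $\|\mM_{\bm\lambda}\|_*=\|\mM_{\bm\lambda}\|_F$ forces $\mM_{\bm\lambda}$ to have rank at most one, while the Cauchy--Schwarz step is tight only when $\lambda_k\propto\sqrt{|a_k|}$. Because each active row of $\mM_{\bm\lambda}$ blows up as $\lambda_k\to0$, the infimum over $\bm\lambda$ is coercive toward the boundary and is attained at some $\bm\lambda^\star>0$; at that point the positive diagonal scaling preserves the rank of the active rows of $\mW$, so whenever those rows span more than one dimension we get $\|\mM_{\bm\lambda^\star}\|_*>\|\mM_{\bm\lambda^\star}\|_F\ge\|\va\|_1$ and hence $\Phi_3(\mW,\va)>\|\va\|_1^{2/3}$ strictly. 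Consequently equality $R_3(f)=R_2(f)^{2/3}$ can hold only if $f$ admits an $R_2$-minimal representation whose active rows are colinear, i.e.\ a rank-one (ridge) representation.

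The main obstacle is therefore to rule out such a hidden rank-one representation of $f$: the hypothesis supplies only one particular minimal representation $(\mW_1,\va_1)$ of rank greater than one, whereas strictness requires that \emph{no} $R_2$-minimal representation of $f$ be rank one. I would close this by arguing that possessing a rank-one representation makes $f$ a ridge function $f(\vx)=\phi(\vu^\T\vx)+c$, for which every $R_2$-minimal representation must have all active units aligned with $\pm\vu$, so that the existence of the rank-$>1$ minimal representation $(\mW_1,\va_1)$ is contradictory. This alignment statement is precisely the kind of claim underlying \cref{prop:colineardata} and its corollary, and is the step I expect to demand the most care. Granting it, the lower bound for $f$ is strict and $R_3(g)=R_2(f)^{2/3}<R_3(f)$ follows.
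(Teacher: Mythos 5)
Your proposal is correct and shares the paper's skeleton (reduce to $\Phi_3$ via \cref{lem:phiLinfdef}, pin down $R_3(g)=\|\va_2\|_1^{2/3}$ via \cref{cor:rankone}, and lower-bound $\Phi_3(\mW_1,\va_1)$ by $\|\va_1\|_1^{2/3}$ using $\|\cdot\|_*\ge\|\cdot\|_F$ together with Cauchy--Schwarz against $\|\bm\lambda\|_2=1$, which is exactly the paper's \cref{lem:minlam}), but you obtain \emph{strictness} by a genuinely different mechanism. The paper expands $(\sum_i\sigma_i)^2$ into the Frobenius part plus the cross term $2\sum_{i>j}\sigma_i\sigma_j$, and bounds the cross term below by an explicit positive constant using AM--GM, a rank-$r$ row submatrix, a determinant identity, and a second Lagrange-multiplier lemma; you instead argue that the infimum over $\bm\lambda$ is attained at an interior point (coercivity as any active $\lambda_k\to 0$) and that at the minimizer the inequality $\|\cdot\|_*>\|\cdot\|_F$ is strict because positive diagonal scaling preserves the rank of the active rows. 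Your route is shorter and avoids the fiddly constant $4C^{1/r}/(r-1)$, at the cost of yielding no quantitative gap. You are also more careful than the paper on one point: the paper's proof only establishes $\Phi_3(\mW,\va)>\|\va\|_1^{2/3}$ for the \emph{given} representation, whereas $R_3(f)$ is a minimum over all representations of $f$; your final step --- ruling out a hidden rank-one ($R_2$-minimal, active-unit) representation of $f$ by observing that it would force $f$ to be a ridge function whose active units must all align --- is the right patch, though as you note it leans on a uniqueness-of-ReLU-decomposition fact (the singular support of $f$ determines the active directions) that is closer to the machinery of \cite{ongie2019function,savarese2019infinite} than to \cref{prop:colineardata}, and, like the paper, you still implicitly use that the minimum defining $R_3(f)$ in \eqref{eq:RLdef} is attained. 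Neither issue is a flaw relative to the paper's own standard of rigor.
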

See \cref{app:proof51} for the proof.

This shows that if several networks interpolate the training data with the same $R_2$-cost, yet there is one having aligned units (i.e., rank-one inner-layer weight matrix $\mW$), the latter network is always the preferred fit according to the $R_3$-cost.

\subsection{Features on two rays}
\label{sec:rays}

Suppose the training features $X = \{\vx_1,...,\vx_n\} \subset \R^d$ can be partitioned into two sets $X_1$, $X_2$, such that $X_1 \subset R_1$ and $X_2\subset R_2$ where $R_1$ and $R_2$ are non-colinear rays separated by half-spaces, i.e., there exist unit vectors $\vw_1,\vw_2 \in \R^2$ with $\vw_1^\T\vw_2 < 0$ and $\vw_1\neq -\vw_2$,  such that every feature $\vx_i \in X_1$ has the form $\vx_i = c_i \vw_1$ for some $c_i>0$, and every feature in $\vx_j \in X_2$ has the form $\vx_j = d_j \vw_2$ for some $d_j > 0$. 

Let $f$ be any network interpolating the training data such that all units active over points in $X_1$ are aligned with $\vw_1$ and all units active over points in  $X_2$ are aligned with $\vw_2$:
\[
f(\vx) = \sum_{k=1}^{K_1}a_{1,k}[\vw_1^\T\vx + b_{1,k}]_+ + \sum_{k=1}^{K_2 }a_{2,k}[\vw_2^\T\vx + b_{2,k}]_+.
\]
Consider the related network $g$ defined by
\[
g(\vx) = \sum_{k=1}^{K_1}\tilde{a}_{1,k}[\vw_0^\T\vx + \tilde{b}_{1,k}]_+ +\sum_{k=1}^{K_2} \tilde{a}_{2,k}[-\vw_0^\T\vx  +\tilde{b}_{2,k}]_+,
\]
where $\vw_0 = \frac{\vw_1-\vw_2}{\|\vw_1-\vw_2\|}$ and $\tilde{a}_{j,k} := a_{j,k}/|\vw_j^\T\vw_0|$ and $\tilde{b}_{j,k} := |\vw_j^\T\vw_0|b_{j,k}$. This is the interpolating network obtained by $f$ replacing all inner-layer weight vectors with $\pm \vw_0$ and rescaling outer-layer weights and bias terms to satisfy interpolation constraints.

We prove that the network $f$ whose weights aligned along the two rays always has lower $R_2$-cost than associated network $g$ with all weights aligned in one direction. However, we also prove the reverse is true of their $3$-layer representation costs assuming the angle between the rays is not too large, and the size of the $\ell^1$-norms of the outer-layer weights of $f$ are sufficiently balanced:

\begin{proposition} 
\label{prop:theta}
For $f$ and $g$ as defined above, we have $R_2(f) < R_2(g)$.
Additionally, $R_3(g) < R_3(f)$ provided
\[
\frac{1}{|\cos(\theta/2)|^2} \leq 1 +  4\frac{\|\va_1\|_1\|\va_2\|_1}{\|\va\|_1^2}|\sin(\theta)|
\]
where $0 < \theta \leq \pi/2$ is the smallest angle between $\vw_1$, $\vw_2$, and $\va_1 = (\va_{1,k})_{k=1}^{K_1}$, $\va_2 = (\va_{1,k})_{k=1}^{K_2}$.
\end{proposition}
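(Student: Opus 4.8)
The plan is to evaluate both costs through the natural two-layer representations of $f$ and $g$, exploiting that $g$ has a rank-one inner weight matrix while $f$ has a rank-two one. The geometric quantity to pin down first is $|\vw_j^\T\vw_0|$. Since $\vw_0=(\vw_1-\vw_2)/\|\vw_1-\vw_2\|$ and $\|\vw_1-\vw_2\|^2=2(1-\vw_1^\T\vw_2)$, a direct computation gives $|\vw_1^\T\vw_0|=|\vw_2^\T\vw_0|=\sqrt{(1-\vw_1^\T\vw_2)/2}$; writing $\vw_1^\T\vw_2=-\cos\theta$ (the rays meet at acute angle $\theta$) and using the half-angle identity yields $|\vw_j^\T\vw_0|=|\cos(\theta/2)|$, so $\tilde a_{j,k}=a_{j,k}/|\cos(\theta/2)|$ and $\|\tilde\va\|_1=\|\va\|_1/|\cos(\theta/2)|$. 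The $R_2$ claim is then the special case $\rank(\mX_k)=1$ of the construction in \Cref{sec:subspaces}: here $r_k=\|\mP_{\Sc}\vw_k\|_2=|\cos(\theta/2)|$ and \eqref{eq:R2_r} holds automatically, so $R_2(f)=\|\va\|_1$ and $R_2(g)=\|\tilde\va\|_1=\|\va\|_1/|\cos(\theta/2)|$. Because $\theta\in(0,\pi/2]$ forces $|\cos(\theta/2)|<1$, we get $R_2(f)<R_2(g)$.

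For the $R_3$ comparison I would use \Cref{lem:phiLinfdef}. As $g$'s inner weights are $\pm\vw_0$ (rank one, unit rows), \Cref{cor:rankone} gives $R_3(g)=\|\tilde\va\|_1^{2/3}=\big(\|\va\|_1/|\cos(\theta/2)|\big)^{2/3}$. For $f$, the matrix $\mD_{\vlambda}^{-1}\mD_{\va}\mW$ has the form $\vp\vw_1^\T+\vq\vw_2^\T$ with $\vp,\vq$ supported on the two disjoint groups; since $\vp\perp\vq$ and $\|\vw_1\|=\|\vw_2\|=1$, a $2\times 2$ reduction of its Gram matrix gives $\|\mD_{\vlambda}^{-1}\mD_{\va}\mW\|_*=\sqrt{P^2+Q^2+2|\sin\theta|\,PQ}$ with $P=\|\vp\|_2$, $Q=\|\vq\|_2$, using $1-(\vw_1^\T\vw_2)^2=\sin^2\theta$. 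The minimization over $\vlambda$ decouples: choose a budget $s=\sum_{k\in\text{grp }1}\lambda_k^2$ between the groups, and within each group a Cauchy–Schwarz minimization gives $\min P^2=\|\va_1\|_1^2/s$ and $\min Q^2=\|\va_2\|_1^2/(1-s)$ simultaneously (disjoint variables). Writing $A_1=\|\va_1\|_1$, $A_2=\|\va_2\|_1$, this yields
$$R_3(f)^3=\min_{s\in(0,1)}\Big[\tfrac{A_1^2}{s}+\tfrac{A_2^2}{1-s}+\tfrac{2|\sin\theta|A_1A_2}{\sqrt{s(1-s)}}\Big].$$

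The key estimate is a lower bound on this minimum obtained by bounding the two pieces separately: $\frac{A_1^2}{s}+\frac{A_2^2}{1-s}\ge (A_1+A_2)^2$ (Cauchy–Schwarz) and $\frac{1}{\sqrt{s(1-s)}}\ge 2$ (since $s(1-s)\le 1/4$), hence $R_3(f)^3\ge (A_1+A_2)^2+4A_1A_2|\sin\theta|$. Combining with $R_3(g)^3=(A_1+A_2)^2/\cos^2(\theta/2)$ and the hypothesis $\frac{1}{\cos^2(\theta/2)}\le 1+4\frac{A_1A_2}{(A_1+A_2)^2}|\sin\theta|$ produces the chain $R_3(g)^3=\frac{(A_1+A_2)^2}{\cos^2(\theta/2)}\le (A_1+A_2)^2+4A_1A_2|\sin\theta|\le R_3(f)^3$, so $R_3(g)\le R_3(f)$. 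The inequality is strict whenever $A_1\neq A_2$ (the two sub-minima are attained at different $s$, so the lower bound is not tight) or the hypothesis is strict; only the boundary case $A_1=A_2$ with equality in the hypothesis yields $R_3(g)=R_3(f)$.

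The main obstacle, beyond the exact nuclear-norm evaluation and the clean decoupling of the $\vlambda$-minimization, is justifying that $\Phi_3$ evaluated at $f$'s two-ray representation actually equals $R_3(f)$, i.e. that no cheaper three-layer representation of $f$ exists. I would argue this from the kink structure of $f$: the breakpoint hyperplanes force any exact representation to use only the directions $\pm\vw_1,\pm\vw_2$, and $\Phi_3$ is invariant under the remaining freedom (diagonal rescaling, already accounted for in \Cref{lem:phiLinfdef}, and same-sign splitting of units within a direction, which preserves each $\|\va_j\|_1$). Everything else reduces to the elementary inequalities above.
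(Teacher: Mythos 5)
Your proof is correct and follows essentially the same route as the paper's: compute $\Phi_3$ of the aligned network $g$ exactly via \cref{cor:rankone} (using $|\vw_j^\T\vw_0|=\cos(\theta/2)$), reduce the nuclear norm for $f$ to a $2\times 2$ computation giving $\sqrt{P^2+Q^2+2PQ|\sin\theta|}$, and lower-bound by optimizing the Frobenius term and the cross term separately over $\bm\lambda$ to obtain $\|\va\|_1^2+4\|\va_1\|_1\|\va_2\|_1|\sin\theta|$. Your additional observations --- the exact one-variable form of the $\bm\lambda$-minimization, the characterization of when the lower bound is tight (only when $\|\va_1\|_1=\|\va_2\|_1$), and the need to rule out cheaper two-layer representations of $f$ so that $\Phi_3$ at the given weights equals $R_3(f)$ --- are refinements the paper's proof leaves implicit but do not change the argument.
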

See \cref{app:proof52} for the proof.

\begin{figure}[ht!]
\centering
\includegraphics[width=.7\linewidth]{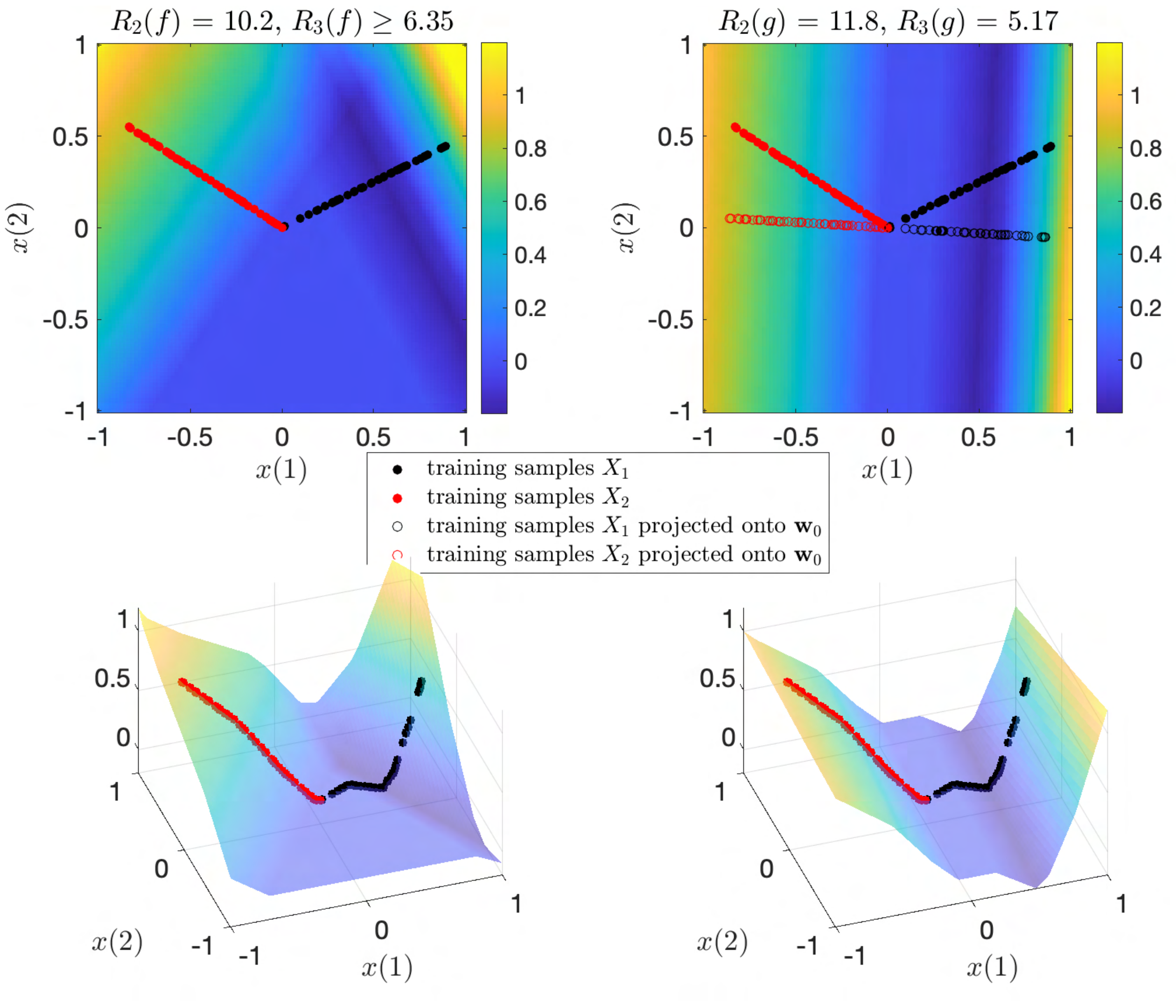}
\caption{\textbf{Effect of learning a minimum representation cost function interpolating training samples} with either a 2-layer ReLU network or a 3-layer network with a linear layer followed by ReLU layer. 
\textit{Left}: Minimum $R_2$ interpolant when samples lie on two rays separated by an angle of $2\pi/3$. 
\textit{Right}: Minimum $R_3$ interpolant of same data. 
Both functions are perfect interpolants; see \cref{fig:verify}.
When the training data may be interpolated by a function of the form $f(\mP_{\cal S}\vx)$, the linear layer promotes interpolating functions that do not vary in the direction perpendicular ${\cal S}$.}
\label{fig:multiunit}
\end{figure}

\begin{figure}[ht!]
\centering
\includegraphics[width=.7\linewidth]{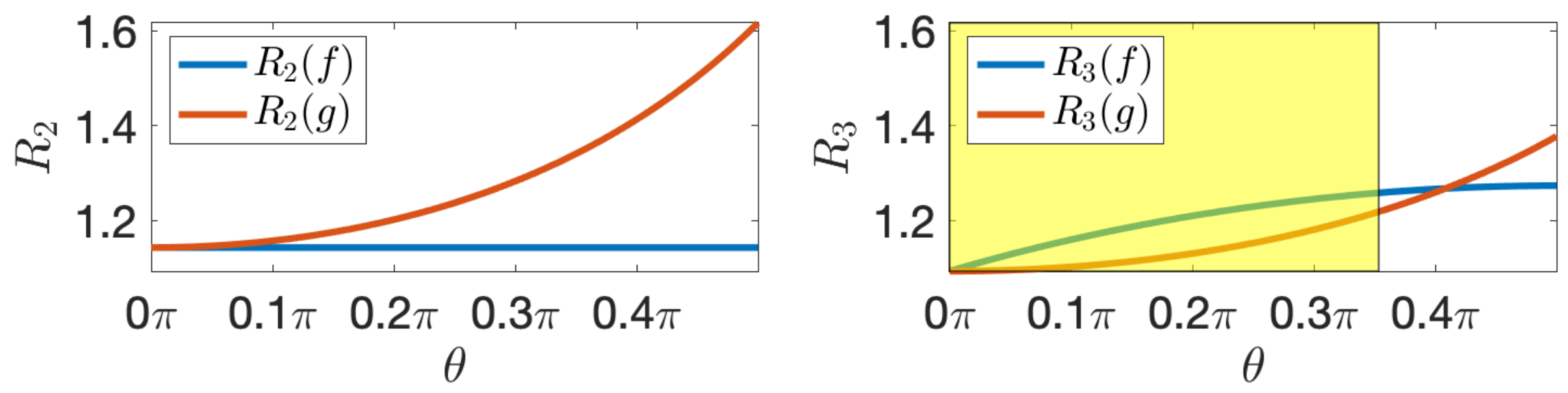}
\label{fig:theta}
\caption{\textbf{An interpolant with aligned units (induced by a three-layer network with a linear layer) may have a larger $R_3$ representation cost than an interpolant with unaligned units, depending on the training data distribution.} Data was generated as described in \cref{sec:rays} for various angles $\theta$ between the rays $\vw_1$ and $\vw_2$. For most values of $\theta$, the function $g$ with aligned units has the lower $R_3$ representation cost even when it has the higher $R_2$ representation cost, but for some values of $\theta$, the function $f$ with unaligned units has lower  $R_3$ \textit{and} lower $R_2$ representation costs. 
Examples of such functions are displayed in \cref{fig:counterintuitive}.
The yellow region highlights the range of $\theta$ for which \cref{prop:theta} predicts that the interpolant $g$ with aligned units will have lower $R_3$ representation cost than the interpolant $f$ with unaligned units.}
\end{figure}

This setting with samples on two rays is illustrated in \cref{fig:multiunit} with $\vw_{1} = [2,1]^\top$ and $\vw_{2}$
is $\vw_1$ rotated $120^{\circ}$ (before normalization). Note that 
both $f$ and $g$ fit the training samples exactly but have very different behavior away from the subspaces supporting the training data. 
In the two-layer setting, ReLU units may be unaligned with one another, and instead align with the support of the training samples (in this case $\vw_1$ and $\vw_2$) in order to minimize the $R_2$-cost, leading to complex behavior away from this support that presents challenges for out-of-distribution generalization analysis.

In contrast, the $R_3$-cost induced by adding a linear layer promotes  units that are aligned with a subspace ${\cal S}$,  such that there exists some $f \in N_2(\R^d)$ with $y_i = f(\mP_{\cal S}\vx_i) \; \forall i$. That is, even if the training samples do not lie on a subspace, 
if there is a subspace onto which the samples may be projected while still admitting an interpolant, then
that interpolant \textit{may} have a lower $R_3$-cost, depending on the angle $\theta$ between $\vw_1$ and $\vw_2$, as shown in \cref{fig:theta}. The resulting alignment of the ReLU units yields less complex behavior of $\hat f$ away from the support of the training samples, potentially leading to better out-of-distribution generalization. 

\subsection{General subspace projections}
\label{sec:projections}

Superficially, one might think the 
$R_3$ representation cost in \cref{lem:phiLinfdef}  associated with a network with linear layers composed with a two-layer ReLU network
is lower when $\mW$ has lower rank -- i.e. when the ReLU units are more aligned. However, the $R_3$ representation cost offers a more nuanced perspective because it highlights the interplay between the ReLU unit alignment with their scale (i.e.  $\va$). To see this, first note that for finite training samples, we may project them all into a subspace ${\cal S}$ for which no two distinct points are projected to the same location and then learn an interpolating function using these projected samples as feature vectors. As described in \cref{sec:interp}, the minimum $R_L$ interpolant found using this procedure will have all ReLU weight vectors $\vw_k \in {\cal S}$, so that $\rank(\mW) = \dim({\cal S})$. In other words, it is generally possible to find interpolating functions with aligned ReLU units. However, some ${\cal S}$ will lead to larger $R_3$ representation costs than others. As illustrated in \cref{fig:multi-project-med}, a poor choice of ${\cal S}$ (spanned by $\vw_0$ in the figure) will lead to a configuration of projected samples that can only be interpolated by a piecewise linear function with \textit{many} pieces (third column), while they may be interpolated with many fewer linear pieces for alternative ${\cal S}$ (first column with ${\cal S} = \R^2$ and second column with 
${\cal S} = \sspan(\vw_0)$). In other words, forcing ReLU units to lie in a poorly-chosen subspace may yield greater ReLU alignment by requiring many more units -- i.e., by sacrificing sparsity.

The representation cost in \eqref{eq:phi3} accounts for \textit{both} the alignment (through $\mW$ and the nuclear norm) and the sparsity (through $\va$). From here, we may infer that adding a linear layer to a two-layer ReLU network does more than ``promote alignment of ReLU units'':
\textbf{when searching for interpolants of finite training datasets, training a ReLU network with additional linear layers  implicitly seeks a \textit{low-dimensional} subspace such that a \textit{parsimonious} two-layer ReLU network can interpolate the projections of the training samples onto the subspace.}
Note that we are \textit{not} assuming that the training samples lie on a low-dimensional subspace; that is, we would not see the same effect by simply performing PCA on the training features before training the network. Rather, the best choice of subspace here depends heavily on the training labels.

\begin{figure}[ht!]
    \centering
    \includegraphics[width=.7\linewidth]{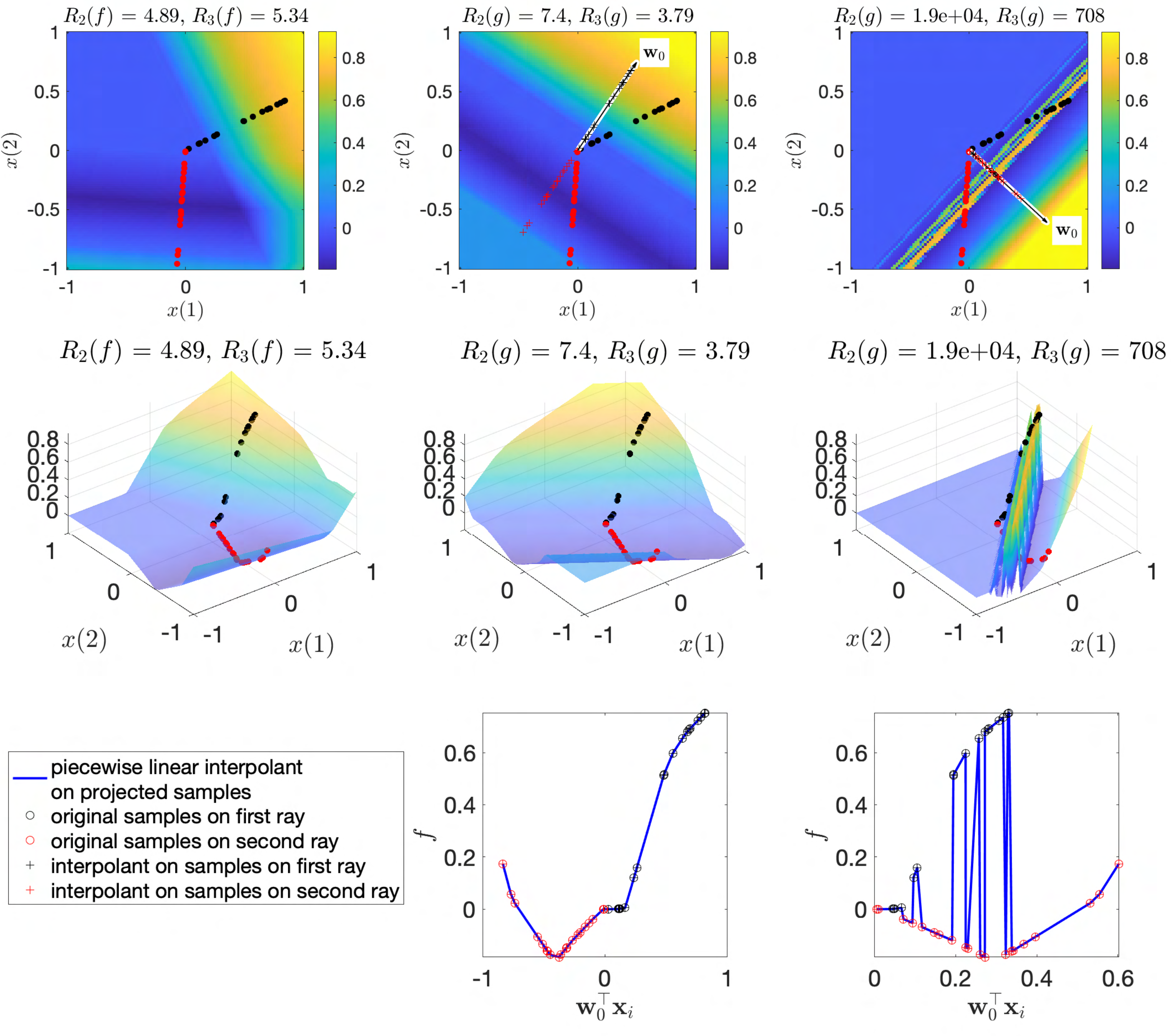}
    \caption{
    \textbf{Interpolating samples with aligned ReLU units} is akin to projecting samples onto a subspace (e.g., 1-D subspace $\vw_0$) and learning a piecewise linear interpolant on the subspace. Different choices of the subspace result in effective $\mW$s with the same nuclear norm but vastly different representation costs $R_3$. For instance, the second column shows the ideal subspace discussed in \cref{sec:rays}, yielding a small $R_3$-cost. In contrast, the third column shows a subspace choice that requires one to learn a piecewise linear function with many more pieces, yielding a much larger $R_3$-cost. These two examples have weight matrices $\mW$ with the same nuclear norm but different outer layer weights $\va$.  Complementary plots are in \cref{fig:multi-project-extra}.}
    \label{fig:multi-project-med}
\end{figure}

\section{Discussion}

Past work  exploring representation costs of neural networks either focused on two-layer networks \cite{savarese2019infinite,ongie2019function} or linear networks \cite{dai2021representation}. This paper is an important first step towards understanding the representation cost of \textit{nonlinear, multi-layer} networks. The representation cost expressions we derive offer new, quantitative insights into how multi-layer networks interpolate a finite set of training samples when trained using weight decay and reflect an interaction between ReLU unit alignment and sparsity that is not captured by past representation cost analyses.
Specifically, training a ReLU network with linear layers  implicitly seeks a \textit{low-dimensional} subspace such that a \textit{parsimonious} two-layer ReLU network can interpolate the projections of the training samples onto the subspace, {even when the samples themselves do not lie on a subspace.}
We note that ReLU alignment induced by linear layers leads to more predictable interpolant behavior off the training data support. Specifically, \cref{sec:subspaces} and \cref{sec:projections} show that
when the training data may be interpolated by a function of the form $f(\mP_{\cal S}\vx)$, using a linear layer promotes interpolating functions that do not vary in  directions orthogonal to ${\cal S}$.

\bibliography{refs}
\bibliographystyle{apalike}

%%%%%%%%%%%%%%%%%%%%%%%%%%%%%%%%%%%%%%%%%%%%%%%%%%%%%%%%%%%%%%%%%%%%%%%%%%%%%%%
%%%%%%%%%%%%%%%%%%%%%%%%%%%%%%%%%%%%%%%%%%%%%%%%%%%%%%%%%%%%%%%%%%%%%%%%%%%%%%%
% APPENDIX
%%%%%%%%%%%%%%%%%%%%%%%%%%%%%%%%%%%%%%%%%%%%%%%%%%%%%%%%%%%%%%%%%%%%%%%%%%%%%%%
%%%%%%%%%%%%%%%%%%%%%%%%%%%%%%%%%%%%%%%%%%%%%%%%%%%%%%%%%%%%%%%%%%%%%%%%%%%%%%%
% \newpage
\appendix
\onecolumn
\section{Proofs of Results in Section 3}\label{app:repcost}

\subsection{Proof of \cref{thm:univar}}
While \cref{thm:univar} can be proved by more direct means, we use the machinery developed in \cref{sec:simplify} to give a quick proof: In the univariate setting, the product of all linear-layers reduces to a column vector $\vw \in \R^{K\times 1}$, which is necessarily a rank-one matrix. Therefore, \cref{thm:univar} is a 
a direct consequence of \cref{cor:rankone}, which is a special case of \cref{prop:orthorows}, proved below.

\subsection{Proof of \cref{lem:schatten}}
The result is a direct consequence of the following variational characterization of the Schatten-$q$ quasi-norm for $q = 2/\ell$ where $\ell$ is a positive integer:
\[
\|\mW\|^{2/\ell}_{\Sc^{2/\ell}} = \min_{\mW = \mW_1\mW_2\cdots\mW_{\ell}} \frac{1}{\ell}\left(\|\mW_1\|_F^2 + \|\mW_2\|_F^2 + \cdots +  \|\mW_\ell\|_F^2\right)
\]
where the minimization is over all matrices $\mW_1,...,\mW_\ell$ of compatible dimensions. The case $\ell=2$ is well-known (see, e.g., \cite{srebro2004maximum}). The general case for $\ell \geq 3$ is established in \cite[Corollary 3]{shang2020unified}.

% This follows by an inductive argument on the number of factors $\ell$. The base case $\ell = 1$ is obvious. Now assume the equality holds for $\ell-1$. We may rewrite the single minimization above as two nested minimizations:
% \[
% \min_{\mW = \mU\mV} \frac{1}{\ell} \|\mV\|_F^2+ \min_{\mU = \mW_1\mW_2\cdots\mW_{\ell-1}} \frac{1}{\ell}\left(\|\mW_1\|_F^2 + \|\mW_2\|_F^2 + \cdots + \|\mW_{\ell-1}\|_F^2\right)
% \]
% which, by the inductive hypothesis, reduces to
% \[
% \min_{\mW = \mU\mV} \frac{\ell-1}{\ell}\|\mU\|_{\Sc^{2/(\ell-1)}}^{2/(\ell-1)} + \frac{1}{\ell}\|\mV\|_F^2.
% \]
% Following steps analogous to those used to prove Theorem 1 of \cite{shang2017bilinear} (which handles the case $\ell = 3$), we may show a minimizing pair is $\mU = \widehat{\mU}\widehat{\bm\Sigma}^{(\ell-1)/\ell}$ and $\mV = {\widehat{\bm\Sigma}}^{1/\ell}\widehat{\bm V}^\T$ where $\widehat{\mU}\widehat{\bm\Sigma}\widehat{\mV}^\T$ is a thin SVD of $\mW$, and so the minimum value of the above objective is $\|\mW\|_{\Sc^{2/\ell}}^{2/\ell}$, which proves the induction.

\subsection{Proof of \cref{lem:phiLinfdef}}
For any fixed $\bm\lambda > 0$, we may separately minimize over all scalar multiples $c\bm\lambda$ where $c>0$, to get 
\begin{align*}
\Phi_L(\mW,\va) & = \inf_{\bm\lambda > 0} 
\left( 
\inf_{c > 0}  
c^2 \tfrac{1}{L}\|\va\mD_{\bm\lambda}\|_2^2 
+ c^{-2/(L-1)}\tfrac{L-1}{L}\|\mD_{\bm\lambda}^{-1}\mW\|^{2/(L-1)}_{\Sc^{2/(L-1)}}
\right)\\
& = \inf_{\bm\lambda > 0} \left(\|\va \mD_{\bm\lambda}\|_2\|\mD_{\bm\lambda}^{-1}\mW\|_{\Sc^{2/(L-1)}}\right)^{2/L}
\end{align*}
The last step follows by the weighted AM-GM inequality: $ \tfrac{1}{L} a + \tfrac{L-1}{L} b \geq (ab^{L-1})^{1/L}$, which holds with equality when $a = b$. Here we have $a = (c \|\va\mD_{\bm\lambda}\|)^2$ and $b = \left(c^{-1}\|\mD_{\bm\lambda}^{-1}\mW\|_{\Sc^{2/(L-1)}}\right)^{2/(L-1)}$, and there exists a $c>0$ for which $a = b$, hence we obtain the lower bound.

Finally, performing the invertible change of variables $\lambda_i = \lambda_i'/a_i$, we have $\mD_{\bm\lambda'}^{-1} = \mD_{\bm\lambda}^{-1}\mD_\va $, and so
\begin{align}
\Phi_L(\mW,\va) & = \inf_{\bm\lambda' > 0} \left(\|\bm\lambda'\|_2\|\mD_{\bm\lambda'}^{-1}\mD_\va\mW\|_{\Sc^{2/(L-1)}}\right)^{2/L}\\
& = \inf_{\substack{\bm\lambda' > 0\\\|\bm\lambda'\|_2 = 1}} \|\mD_{\bm\lambda'}^{-1}\mD_\va\mW\|_{\Sc^{2/(L-1)}}^{2/L}
\end{align}
where we are able to constrain $\bm\lambda'$ to be unit norm since $\|\bm\lambda'\|_2\|\mD_{\bm\lambda'}^{-1}\mD_\va\mW\|_{\Sc^{2/(L-1)}}$ is invariant to scaling  $\bm\lambda'$ by positive constants.

\subsection{Proof of \cref{prop:orthorows}}
We use the variational characterization of $\Phi_L(\mW,\va)$ given in \cref{lem:phiLinfdef} as an infimum over the Schatten-$q$ quasi-norm of matrices of the form $\mD_{\vlambda}^{-1}\mD_\va \mW$. Fix a vector $\bm\lambda \in \R^K$ with positive entries. We begin by constructing an SVD of the matrix $\mD_{\vlambda}^{-1}\mD_\va \mW$. Let $\mV = [\vv_1 \cdots \vv_m]\in\R^{d\times m}$.
Observe that $\mW$ factors as $\mW = \mU\mV^\T$ where $\mU = [\vu_1 \cdots \vu_m]\in \R^{K\times m}$ is such that $[\vu_j]_k = \pm 1$ when the $k$th row of $\mW$ is equal to $\pm \vv_j$ and $[\vu_j]_k =0$ otherwise. In particular, the vectors $\vu_1,...,\vu_m$ are mutually orthogonal, and likewise so are the vectors $\mD_{\vlambda}^{-1}\mD_\va \vu_1,...,\mD_{\vlambda}^{-1}\mD_\va \vu_m$. 
We also have 
$\|\mD_{\vlambda}^{-1}\mD_\va \vu_j\|_2 = \|\mD_{{\vlambda}_j}^{-1}\va_j\|_2 :=\sigma_j$ where ${\vlambda}_j$ for all $j=1,...,m$ denotes the restriction of $\vlambda$ to the subset of entries corresponding to rows of $\mW$ equal to $\pm \vv_j$. Let $\bm\Sigma = \mD_{\bm\sigma}$ where $\bm\sigma = (\sigma_1,...,\sigma_m)$. Then $\hat{\mU} = \mU {\bm\Sigma}^{-1}$ has orthonormal columns, and so does $\mV$ by assumption. This shows an SVD of $\mD_{\vlambda}^{-1}\mD_\va \mW$ is given by
\[
\mD_{\vlambda}^{-1}\mD_\va \mW = \hat{\mU}\bm\Sigma \mV^\T
\]
In particular, $\bm\sigma$ are the singular values of $\mD_{\vlambda}^{-1}\mD_\va \mW$. Therefore, for any $q>0$ we have
\[
\|\mD_{\vlambda}^{-1}\mD_\va \mW\|_{\Sc^{q}}^q = \sum_{j=1}^m \|\mD_{{\vlambda}_i}^{-1}\va_j\|_2^q.
\]
Let $q = 2/(L-1)$. Then we have
\[
\Phi_L(\mW,\va)^{\frac{L}{L-1}} = 
\inf_{\substack{\bm \lambda > 0\\ \|\bm\lambda\|_2 = 1} } \|\mD_{\vlambda}^{-1}\mD_\va \mW\|_{\Sc^{q}}^q = \inf_{\substack{\bm \lambda > 0\\ \|\bm\lambda\|_2 = 1} } \sum_{j=1}^m \|\mD_{{\vlambda}_j}^{-1}\va_j\|_2^q.
\]
The inf on the right-hand side is equivalent to
\begin{align*}
\inf_{\substack{\bm c_j > 0 \\ \sum_{j=1}^m c_j^2 = 1}} \inf_{\substack{\bm \lambda_j > 0\\ \|\bm\lambda_j\|_2 = c_j}} \sum_{j=1}^m 
 \|\mD_{{\vlambda}_j}^{-1}\va_j\|_2^q & = \inf_{\substack{\bm c_j > 0 \\ \sum_{j=1}^m c_j^2 = 1}} \sum_{j=1}^m \inf_{\substack{\bm \lambda_j > 0\\ \|\bm\lambda_j\|_2 = c_j}} 
 \|\mD_{{\vlambda}_j}^{-1}\va_j\|_2^q\\
& =  \inf_{\substack{\bm c_j > 0 \\ \sum_{j=1}^m c_j^2 = 1}} \sum_{j=1}^m 
 \left(\frac{1}{c_j}\|\va_j\|_1\right)^q\\
 & = \left(\sum_{j=1}^m \|\va_j\|_1^{2/L}
 \right)^{\frac{L}{L-1}}
\end{align*}
where the final equality follows from an application of Lemma 3.1 in \cite{steinberg2005computation}, proving the claim.

\subsection{Proof of \cref{lem:Q}}
Starting from \cref{lem:phiLinfdef}, we have 
\[
\Phi_3(\mW.\va) = \inflam \|\mD_{\vlambda}^{-1}\mD_\va \mW\|_*^{2/3}.
\]
Since the operator norm is the convex dual of the nuclear norm, the right-hand-side above is equivalent to
\[
\inflam \max_{\|\mQ\|\leq 1} \langle \mD_{\vlambda}^{-1}\mD_\va \mW, \mQ\rangle^{2/3}
\]
where the maximum is over all $\mQ$ having the same dimensions as $\mQ$ and $\|\mQ\|$ denotes the spectral norm of $\mQ$, and $\langle\mW,\mQ\rangle = \Tr \mW^\T\mQ$. Equivalently, expanding the trace inner product in terms of the rows of $\mW$ and $\mQ$ we have
\[
\Phi_3(\mW,\va) = \inflam  \max_{\|\mQ\| \leq  1} \left( \sum_{k=1}^K \frac{a_k \langle \vw_k,\vq_k\rangle}{\lambda_k}  \right)^{2/3}
\]
where $\vw_i$ and $\vq_i$ denote the $i$th row of $\mW$ and $\mQ$, respectively. By Sion's minimax theorem, we may exchange the order of the inf and max, to get
\[
\Phi_3(\mW,\va) =  \max_{\|\mQ\| \leq  1} \inflam \left( \sum_{k=1}^K \frac{a_k \langle \vw_k,\vq_k\rangle}{\lambda_k}  \right)^{2/3} = \max_{\|\mQ\| \leq  1} \sum_{k=1}^K |a_k \langle \vw_k,\vq_k\rangle|^{2/3}
\]
where the final equality follows from an application of Lemma 3.1 in \cite{steinberg2005computation}.

\section{Proofs of Results in Section 4}\label{app:mincost}
\subsection{Proof of  \cref{prop:colineardata}}
This result is most easily shown using the initial formulation of the data interpolating problem \eqref{eq:opt1} as minimizing the Euclidean norm of the weights $C_L(\theta)$ in an $L$-layer representation.

Let $f = h^{(L)}_\theta$ be any minimum $C_L(\theta)$ interpolant of the training data, whose first layer weight matrix is $\mW_1$. Let $\Sc$ be the subspace spanned by the training data locations, and let $\mP_{\Sc}$ be the orthogonal projector onto $\Sc$. We will show that $\mW_1 = \mW_1 \mP_{\Sc}$, and so $f(\vx) = f(\mP_{\Sc}\vx)$.

Suppose, by way of contradiction, that $\mW_1 \neq \mW_1 \mP_{\Sc}$, or equivalently, $\mW_1(\mI-\mP_{\Sc})\neq \bm 0$. By the Pythagorean theorem we have
\[
\|\mW_1\|_F^2 = \|\mW_1\mP_{\Sc}\|_F^2 + \|\mW_1(\mI-\mP_{\Sc})\|_F^2
\]
which implies 
\[
\|\mW_1\|_F^2  >  \|\mW_1\mP_{\Sc}\|_F^2
\]
where the inequality is strict since by the assumption  $\mW_1(\mI-\mP_{\Sc}) \neq \bm 0$. Also, since $\vx_i\in\Sc$ for all $i=1,...,n$, we have $\mW_1\vx_i = \mW_i\mP_{\Sc}\vx_i$. Therefore, the cost $C_L(\theta)$ is always strictly reduced by replacing $\mW_1$ with $\widetilde{\mW}_1 = \mW_1\mP_{\Sc}$, while the data fit is left unchanged, violating the assumption that $\mW_1$ belonged to a minimizing set of parameters, which proves the claim.

\subsection{Proof of  \cref{prop:colineardata2}}
Without loss of generality, we may translate the data points so that they lie on a one-dimensional linear subspace.

Let $f(\vx) = \va^\T[\mW\vx+\vb]_+ + c$ be any minimum $R_L$-norm interpolant. Because we can rescale the rows of $\mW$ and the corresponding entries of $\va$ without changing the representation cost, we may assume $\mW$ has unit-norm rows. Also, since the data is co-linear,  \cref{prop:colineardata} shows that
\[
f(\vx) = f(\mP_{\Sc} \vx)
\]
where $\Sc$ is the one-dimensional subspace spanned by the data locations. This implies that the inner-layer weight matrix $\mW$ is rank one of the form $\mW = \vs\vu^\T$. Therefore, by  \cref{cor:rankone}, we have
\[
\Phi_L(\mW,\va) = \|\va\|_1^{2/L}
\]
Since $\|\va\|_1^{2/L}$ is a monotonic transformation of $\|\va\|_1$, the sets of minimizing $R_L$-interpolating solutions for all $L\geq 2$ must coincide, similar to the univariate case.

\section{Proofs of Results in Section 5}\label{app:example}

\subsection{Proof of  \cref{prop:rankoneR3}}\label{app:proof51}
We first prove two lemmas:
\begin{lemma}\label{lem:minlam}
For any vector $\va \in \R^K$ we have
\[
\inf_{\substack{\bm \lambda > 0\\ \|\bm\lambda\|_2 = 1} } \|\mD^{-1}_{\bm\lambda} \va\|_2^2 = \|\va\|_1^2
\]
\end{lemma}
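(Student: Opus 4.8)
The plan is to reduce this constrained vector optimization to a scalar inequality and invoke Cauchy--Schwarz in its ``Engel form'' (Titu's lemma). First I would write the objective out coordinate-wise: since $\mD_{\bm\lambda}^{-1}$ is diagonal with entries $1/\lambda_k$, we have
\[
\|\mD_{\bm\lambda}^{-1}\va\|_2^2 = \sum_{k=1}^K \frac{a_k^2}{\lambda_k^2}.
\]
I would then substitute $t_k := \lambda_k^2$, so that the feasible set $\{\bm\lambda > 0,\ \|\bm\lambda\|_2 = 1\}$ becomes $\{t_k > 0,\ \sum_{k=1}^K t_k = 1\}$, turning the problem into $\inf\{\sum_k a_k^2/t_k : t_k > 0,\ \sum_k t_k = 1\}$.

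Next I would establish the lower bound by applying Cauchy--Schwarz to the factorization $|a_k| = (|a_k|/\sqrt{t_k})\cdot\sqrt{t_k}$:
\[
\left(\sum_{k=1}^K |a_k|\right)^2 \leq \left(\sum_{k=1}^K \frac{a_k^2}{t_k}\right)\left(\sum_{k=1}^K t_k\right) = \sum_{k=1}^K \frac{a_k^2}{t_k},
\]
where the last equality uses the constraint $\sum_k t_k = 1$. This shows $\|\mD_{\bm\lambda}^{-1}\va\|_2^2 \geq \|\va\|_1^2$ for every feasible $\bm\lambda$. For the matching upper bound I would exhibit the minimizer dictated by the equality condition $t_k \propto |a_k|$: setting $t_k = |a_k|/\|\va\|_1$ (assuming $\va \neq \vzero$; the case $\va = \vzero$ is trivial), equivalently $\lambda_k = \sqrt{|a_k|/\|\va\|_1}$, is feasible and gives $\sum_k a_k^2/t_k = \|\va\|_1\sum_k |a_k| = \|\va\|_1^2$.

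The one point requiring care — and the only obstacle, a minor one — is the treatment of vanishing entries of $\va$: if some $a_k = 0$, the candidate minimizer has $t_k = 0$, which violates the strict positivity $\lambda_k > 0$. In that case $\|\va\|_1^2$ is not attained but remains the infimum, which I would confirm by a limiting argument. Writing $K_0$ for the number of zero entries, I would take $t_k = \epsilon$ on those coordinates and $t_k = (1 - K_0\epsilon)|a_k|/\|\va\|_1$ on the nonzero ones; this stays feasible, and the objective equals $\|\va\|_1^2/(1 - K_0\epsilon) \to \|\va\|_1^2$ as $\epsilon \to 0^+$, approaching the lower bound from above. Since the statement is phrased as an infimum, this suffices. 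Alternatively, one can simply cite Lemma 3.1 of \cite{steinberg2005computation}, of which this is the $q = 2$ special case already used in the proof of \cref{prop:orthorows}.
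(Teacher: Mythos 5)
Your proof is correct, and it takes a genuinely different route from the paper's. The paper argues via Lagrange multipliers: it forms the Lagrangian $\sum_k |a_k|^2/\lambda_k^2 + \sigma(1-\sum_k\lambda_k^2)$, identifies the unique critical point $\lambda_k = |a_k|^{1/2}/\|\va\|_1^{1/2}$, and reads off the value $\|\va\|_1^2$. Your Cauchy--Schwarz argument replaces that stationarity computation with a global inequality: after the substitution $t_k=\lambda_k^2$, the bound $\bigl(\sum_k|a_k|\bigr)^2 \le \bigl(\sum_k a_k^2/t_k\bigr)\bigl(\sum_k t_k\bigr)$ holds for \emph{every} feasible point, so you get the lower bound without having to argue that the critical point is a global minimizer rather than merely stationary --- a step the paper's one-line Lagrangian argument leaves implicit. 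You also handle the degenerate case $a_k=0$ explicitly, where the would-be minimizer violates the open constraint $\lambda_k>0$ and the infimum is approached but not attained; the paper's proof silently assumes all $a_k\neq 0$. Your closing observation is also apt: the identity is the $q=2$ instance of Lemma 3.1 of \cite{steinberg2005computation}, which the paper invokes for the analogous optimizations in the proofs of \cref{prop:orthorows} and \cref{lem:Q}, so citing it here would have been consistent with the paper's own style. In short, your argument is slightly longer but more self-contained and more careful at the edge cases; the paper's is terser but relies on unstated verification that the stationary point is the global infimum.
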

\begin{proof}
This can be shown directly by optimizing the Lagrangian corresponding to the squared objective:
\[
\mathcal{L}(\bm\lambda,\sigma) = \sum_{k} \frac{|a_k|^2}{\lambda_k^2}+ \sigma\left(1-\sum_k \lambda_k^2\right)
\]
Straightforward calculations show that the only critical point of the Lagrangian is where $\lambda_k = |a_k|^{1/2}/\|\va_1\|^{1/2}$, which gives the claim.
\end{proof}
\begin{lemma}\label{lem:minlam2} 
\[
\inf_{\substack{\bm\lambda \in \R^K\\ \bm \lambda > 0\\ \|\bm\lambda\|_2 = 1} } \prod_{k=1}^K \frac{1}{\lambda_k^2} = K^{K}.
\]
\end{lemma}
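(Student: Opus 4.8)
The plan is to turn the product \emph{minimization} into a product \emph{maximization} and then apply the arithmetic--geometric mean (AM--GM) inequality. First I would make the substitution $t_k := \lambda_k^2 > 0$, under which the constraint $\|\bm\lambda\|_2 = 1$ becomes $\sum_{k=1}^K t_k = 1$ and the objective becomes $\prod_{k=1}^K \frac{1}{\lambda_k^2} = \prod_{k=1}^K \frac{1}{t_k} = \left(\prod_{k=1}^K t_k\right)^{-1}$. Minimizing this is therefore equivalent to maximizing $\prod_{k=1}^K t_k$ over the open probability simplex $\{t_k > 0 : \sum_{k=1}^K t_k = 1\}$.

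Next I would invoke AM--GM directly: for positive reals $t_1,\dots,t_K$ one has $\left(\prod_{k=1}^K t_k\right)^{1/K} \le \frac{1}{K}\sum_{k=1}^K t_k = \frac{1}{K}$, with equality if and only if all the $t_k$ are equal. Hence $\prod_{k=1}^K t_k \le K^{-K}$, attained precisely at $t_k = 1/K$ for every $k$, i.e. $\lambda_k = 1/\sqrt{K}$. Substituting back gives $\prod_{k=1}^K \frac{1}{\lambda_k^2} = \left(\prod_{k=1}^K t_k\right)^{-1} \ge K^{K}$, and the value $K^K$ is achieved at the point just identified, so the infimum equals $K^K$ and is in fact a minimum.

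The argument is entirely elementary, so there is no substantive obstacle; the only thing to check carefully is that the extremal point is feasible. Setting $\lambda_k = 1/\sqrt{K}$ satisfies the strict positivity requirement $\lambda_k > 0$ and lies on the unit sphere, since $\sum_{k=1}^K (1/\sqrt{K})^2 = 1$, so the AM--GM bound is genuinely attained within the feasible set. (Alternatively, one could mirror the Lagrangian computation used in the proof of \cref{lem:minlam}, optimizing $\sum_k \log(1/\lambda_k^2)$ subject to $\sum_k \lambda_k^2 = 1$ to locate the unique critical point at $\lambda_k = 1/\sqrt{K}$, but the AM--GM route is cleaner and immediately yields the value.)
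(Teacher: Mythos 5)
Your proof is correct, and it takes a genuinely different route from the paper's. The paper argues by an exchange/smoothing step: it supposes the infimum is attained at some $\bm\lambda$ with two unequal coordinates $\lambda_i \neq \lambda_j$, observes that minimizing $1/(\alpha\beta)$ subject to $\alpha^2+\beta^2$ fixed forces $\alpha=\beta$, and concludes that any minimizer must have all coordinates equal to $1/\sqrt{K}$. Your substitution $t_k=\lambda_k^2$ followed by a single global application of AM--GM is more direct, and it buys something concrete: it establishes the inequality $\prod_{k}1/\lambda_k^2 \ge K^K$ for \emph{every} feasible point, rather than only deriving a necessary condition on a hypothetical minimizer. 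That matters because the feasible set ($\bm\lambda>0$ on the unit sphere) is not compact, so the paper's argument implicitly assumes attainment without justifying it (one would note that the objective blows up as any $\lambda_k\to 0$, so the infimum can be restricted to a compact subset); your version sidesteps this entirely, and your explicit check that $\lambda_k=1/\sqrt{K}$ is feasible settles attainment. Both arguments are elementary and yield the same value $K^K$.
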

\begin{proof}
We show the inf is attained where $\lambda_1 = \cdots = \lambda_K = 1/\sqrt{K}$. Suppose it is not, so that there exists indices $i\neq j$ such that $\lambda_i \neq \lambda_j$. Holding all the other $\lambda_k$ fixed, consider all $\alpha,\beta>0$ subject to the constraints $\alpha^2 + \beta^2 = C$ where $C=1-\sum_{k\neq i,j}\lambda_k^2$ is constant. Then the optimizer of
\[
\inf_{\substack{\alpha,\beta > 0\\ \alpha^2 + \beta^2 = C}} \frac{1}{\alpha\beta}
\]
occurs where $\alpha = \beta$, hence by replacing $\lambda_i$ with $\lambda_j$ with their common value, we would be able to reduce the original objective, violating the assumption that $\lambda_i$ and $\lambda_j$ were minimizers. Therefore, $\lambda_i = \lambda_j$ for all $i,j$, which implies $\lambda_k = 1/\sqrt{K}$ for all $k$.
\end{proof}

Now we prove \cref{prop:rankoneR3}. Let $\va \in \R^K$ and suppose $\mW \in \R^{K\times d}$ is such that $r:= \rank(\mW) > 1$. Fix a unit-norm weighting vector $\bm\lambda$ with positive entries. Let $\sigma_1,...,\sigma_r$ be the singular values of the matrix $\mD_{\bm\lambda}^{-1}\mD_\va \mW$. Then we have
\[
\|\mD^{-1}_{\bm\lambda} \mD_\va \mW\|_*^2 = \left(\sum_{i=1}^r \sigma_i \right)^2 = \sum_{i=1}^r \sigma_i^2 + 2\sum_{i > j}\sigma_i\sigma_j
\]
Note that 
\[
\left(\sum_{i=1}^r \sigma_i \right)^2 = \|\mD^{-1}_{\bm\lambda} \mD_\va \mW\|_F^2 = \|\mD^{-1}_{\bm\lambda} \va\|_2^2
\]
and by  \cref{lem:minlam} we have
\[
\inf_{\substack{\bm \lambda > 0\\ \|\bm\lambda\|_2 = 1}} \sum_{i>j} \|\mD^{-1}_{\bm\lambda} \va\|_2^2 = \|\va\|_1^2
\]
Also, since there are $r(r-1)/2$ terms in the sum $\sum_{i >  j}\sigma_i\sigma_j$, and each $\sigma_i$ appears as a factor $r-1$ times , by the AM-GM inequality we have
\[
\sum_{i >  j}\sigma_i\sigma_j \geq \frac{2}{r(r-1)}\left(\prod_{i=1}^r \sigma_i^2\right)^{\frac{1}{r}} 
\]
Since $\mQ:= \mD_{\bm\lambda}\mD_\va\mW$ is rank $r$ there exists a subset of $r$ rows of $\mQ$ such that its restriction to these rows is also rank $r$. Collect the indices of these rows into a set $\Omega$, and let $\mP_\Omega \in \R^{r\times K}$ be the matrix that restricts  onto indices in $\Omega$. If we let $\tilde{\sigma}_i$ denote the $i$th singular value of $\mP_{\Omega}\mQ$, observe that $0 < \tilde{\sigma}_i \leq \sigma_i$ for all $i=1,...,r$. Therefore,
\[
\prod_{i=1}^r \sigma_i^2 \geq \prod_{i=1}^r \tilde{\sigma}_i^2 = \det(\mP_{\Omega}\mQ\mQ\mP_{\Omega}^\T) = \frac{\prod_{k\in\Omega}|a_k|^2 }{\prod_{k\in\Omega}\lambda_k^2}\det(\mP_\Omega\mW\mW^\T\mP_\Omega^\T)
\]
Also, by  \cref{lem:minlam2}
\[
\inf_{\substack{\bm \lambda > 0\\ \|\bm\lambda\|_2 = 1}} \frac{1}{\prod_{k\in\Omega}\lambda_k^2} = r^r
\]
since we can shrink all $\lambda_k$ with $k\not\in \Omega$ to zero. Therefore, setting $C = (\prod_{k\in\Omega} |a_k|^2) \det(P_\Omega \mW\mW^TP_\Omega^T) > 0$, we have shown
\[
\inf_{\substack{\bm \lambda > 0\\ \|\bm\lambda\|_2 = 1}} 2\sum_{i>j} \sigma_i\sigma_j \geq \frac{4C^{1/r}}{r-1}
\]

Finally, putting the above pieces together, we have
\begin{align}
\Phi_3(\mW,\va)^{3} = \inf_{\substack{\bm \lambda > 0\\ \|\bm\lambda\|_2 = 1}}
\|\mD^{-1}_{\bm\lambda} \mD_\va \mW\|_*^2 
& = 
\inf_{\substack{\bm \lambda > 0\\ \|\bm\lambda\|_2 = 1}} \sum_{i=1}^r \sigma_i^2 + 2\sum_{i > j}\sigma_i\sigma_j\\
& \geq \inf_{\substack{\bm \lambda > 0\\ \|\bm\lambda\|_2 = 1}} \sum_{i=1}^r \sigma_i^2 + \inf_{\substack{\bm \lambda > 0\\ \|\bm\lambda\|_2 = 1}} 2\sum_{i > j}\sigma_i\sigma_j\\
& \geq \|\va\|_1^2 + \frac{4C^{1/r}}{r-1}
\end{align}
and since $\frac{4C^{1/r}}{r-1} > 0$ we see that $\Phi_3(\mW,\va) > \|\va\|_1^{2/3}$, as claimed.

\subsection{Proof of  \cref{prop:theta}}\label{app:proof52}

First, since $g$ has rank-one weights, we can compute $\Phi_3(g)$ exactly using  \cref{cor:rankone}:
\[
\Phi_3(g)= \left(|\langle \vw_0, \vw_1\rangle|^{-1}\sum_{k=1}^{K_1} \|\va_1\|_1 + |\langle \vw_0, \vw_2\rangle|^{-1}\|\va_2\|_1\right)^{2/3} = \left(\frac{\|\va\|_1} {\cos(\theta/2)}\right)^{2/3}
\]

Next we lower bound $\Phi_3$ using the equivalence
\[
\Phi_3(f) = \inf_{\substack{\bm \lambda > 0\\ \|\bm\lambda\| = 1} } \|\mD^{-1}_{\bm\lambda} \mD_\va \mW\|_*^{2/3}.
\]
Fix any $\bm\lambda > 0$. We begin by computing an SVD of $\mM = \mD_{\bm\lambda}^{-1}\mD_{\va}\mW$. Observe that $\mW = \mU\begin{bmatrix}\vw_1^\T\\
\vw_2^\T
\end{bmatrix}$ where $\mU = [\vu_1 \vu_2]$ with $(\vu_1)_k = 1$ when the $k$th row of $\mW$ is $\vw_1$ and $0$ otherwise, and vice-versa for $\vu_2$. In particular, $\vu_1$ and $\vu_2$ are orthogonal, and so are $\mD_{\lambda}^{-1}\mD_\va\vu_1$ and $\mD_{\lambda}^{-1}\mD_{\va}\vu_2$. Note that the norms of these vectors are $\|\mD_{\bm\lambda_1}^{-1}\va_1\|$ and $\|\mD_{\bm\lambda_2}^{-1}\va_2\|$, where $\mD_{\bm\lambda_1}$ and $\mD_{\bm\lambda_2}$ denote restrictions of $\bm\lambda$ to indices corresponding to the weights in $\va_1$ and $\va_2$, respectively.

Let $\hat{\mU}\hat{\bm\Sigma}\hat{\mV}^\T$ be an SVD of the  $2 \times 2$ matrix \[\mQ :=
\begin{bmatrix}
\|\mD_{\bm\lambda_1}^{-1}\va_1\|_2\vw_1^\T\\
\|\mD_{\bm\lambda_2}^{-1}\va_2\|_2\vw_2^T
\end{bmatrix}
= 
\begin{bmatrix}
\|\mD_{\bm\lambda_1}^{-1}\va_1\|_2 & 0\\
0 & \|\mD_{\bm\lambda_2}^{-1}\va_2\|_2
\end{bmatrix}
\begin{bmatrix}
\vw_1^\T\\
\vw_2^T
\end{bmatrix}
\]
Then an SVD of $\mD_\va \mW$ is
\[
\mD_\va \mW = \left( \mD_{\va}\mU
\begin{bmatrix}
\|\mD_{\bm\lambda_1}^{-1}\va_1\|_2^{-1} & 0\\
0 & \|\mD_{\bm\lambda_2}^{-1}\va_2\|_2^{-1}
\end{bmatrix}
\hat{\mU}\right) \hat{\bm\Sigma}\hat{\mV}^\T
\]

This shows that the singular values of $\mM$ coincide with those of the matrix $\mQ$. Let $\sigma_1$ and $\sigma_2$ be their two non-zero singular values.
Then, we have the identities:
\begin{align}
(\sigma_1+\sigma_2)^2 & =  \sigma_1^2+\sigma_2^2 + 2\sigma_1\sigma_2\\
& = \|\mQ\|_F^2 + 2\det(\mQ\mQ^\T)^{1/2}\\
& = \|\mD_{\bm\lambda}^{-1}\va\|_2^2 + 2\|\mD_{\bm\lambda_1}^{-1} \va_1\|_2\|\mD_{{\bm\lambda}_2}^{-1} \va_2\|_2|\sin(\theta)|
\end{align}
Next we optimize the last two terms \emph{separately} over $\bm\lambda$ to get a lower bound. These can be optimized exactly using simple Lagrange multiplier arguments, to give:
\[
\inf_{\substack{\bm\lambda>0\\ \|\bm\lambda\|_2 = 1}}\|\mD^{-1}_{\bm\lambda} \va\|_2^2 = \|\va\|_1^2
\]
and
\[
\inf_{\substack{\bm\lambda_1,\bm\lambda_2>0\\ \|\bm\lambda_1\|_2^2 +  \|\bm\lambda_2\|_2^2 = 1}}\|\mD_{\bm\lambda_1}^{-1} \va_1\|_2\|\mD_{{\bm\lambda}_2}^{-1} \va_2\|_2 = 2\|\va_1\|_1\|\va_2\|_1
\]
Therefore, we have shown
\begin{align}
\Phi_3(f)& \geq \left(\|\va\|_1^2 +  4\|\va_1\|_1\|\va_2\|_1|\sin(\theta)|\right)^{1/3}\\
& = \|\va\|_1^{2/3}\left(1+  4\frac{\|\va_1\|_1\|\va_2\|_1}{\|\va\|_1^2}|\sin(\theta)|\right)^{1/3}
\end{align}
Finally, comparing this to the expression for $\Phi_3(g)$ and cancelling the factor of $\|\va\|_1^{2/3}$ gives the claim.

\section{Supplemental figures}

\begin{figure}[ht!]
\centering
\includegraphics[width=.5\linewidth]{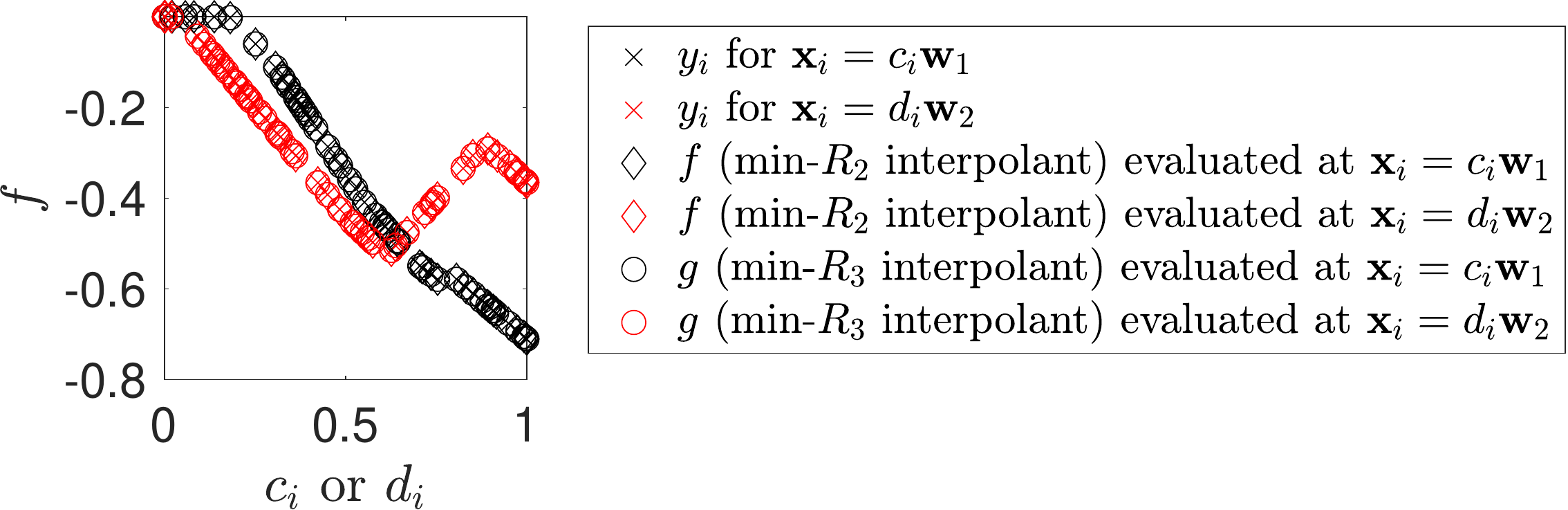}
\caption{Empirical confirmation that both functions $f$ and $g$ in \cref{fig:multiunit} are perfectly interpolating the training data.}
\label{fig:verify}
\end{figure}

\begin{figure*}[ht!]
    \centering
    \includegraphics[width=.9\linewidth]{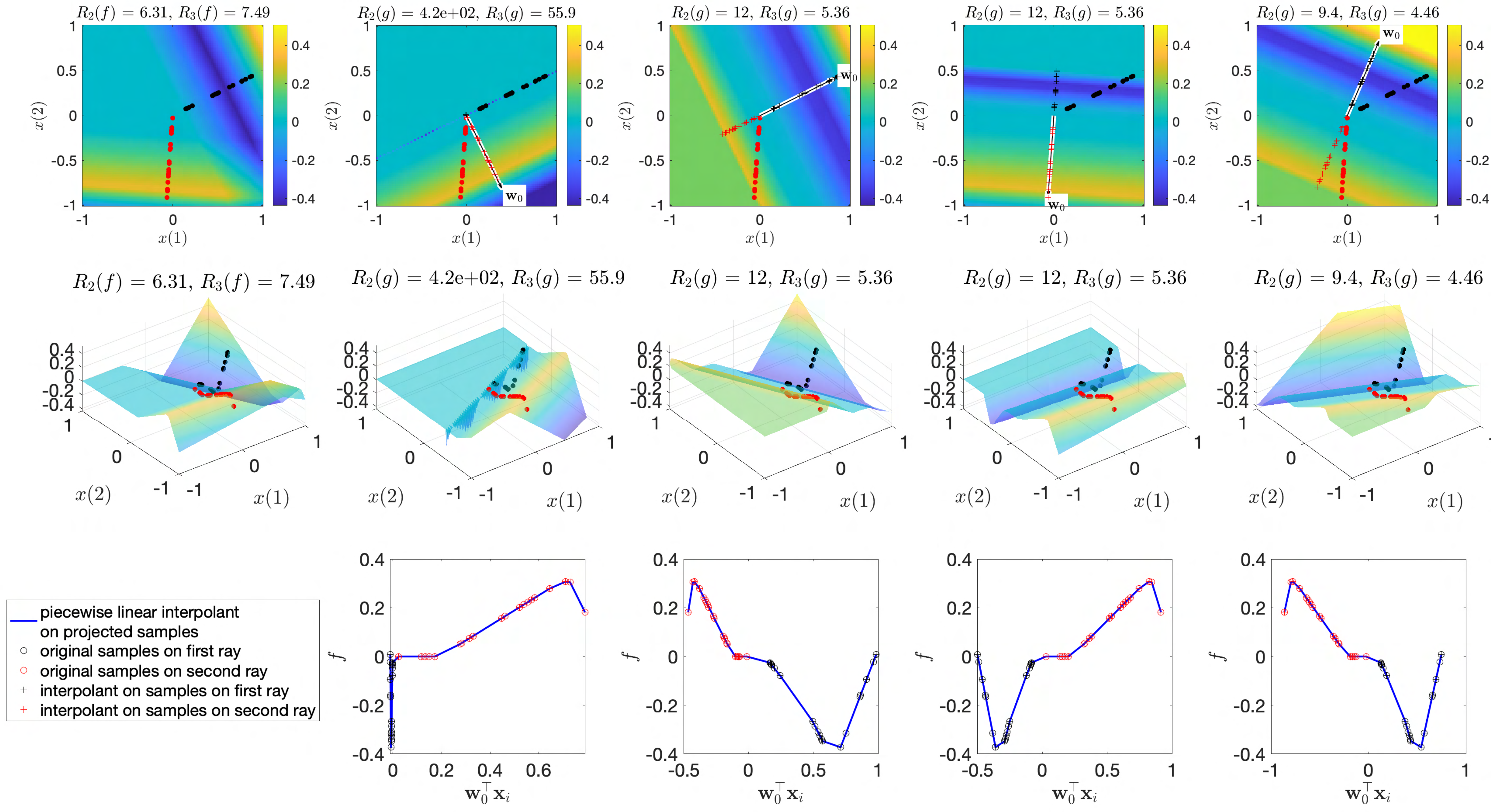} 
    \caption{\textbf{Interpolating samples with aligned ReLU units} like \cref{fig:multi-project-med} with different subspaces $\vw_0$.}
    \label{fig:multi-project-extra}
\end{figure*}

\end{document}